\documentclass[sigconf,nonacm]{acmart}

\usepackage{multirow}
\usepackage[cp1250]{inputenc}
\usepackage[T1]{fontenc}
\usepackage{calligra}
\usepackage{layout}
\usepackage{enumitem}

\usepackage{url}
\usepackage{color}
\usepackage{graphicx}
\usepackage{algorithm}
\usepackage{verbatim}
\usepackage{thmtools}
\usepackage{thm-restate}

\usepackage{hyperref}
\usepackage{nicefrac}
\usepackage{adjustbox}
\usepackage{subcaption}
\hypersetup{
colorlinks=false,
urlcolor=magenta           
}

\newcommand{\norm}[1]{\left\| #1 \right\|}
\newcommand{\sqnorm}[1]{\left\| #1 \right\|^2}
\newcommand{\inner}[2]{\left\langle #1, #2 \right\rangle} 
\newcommand{\roundbrack}[1]{\left(#1\right)}

\newcommand{\nlay}{\ell} 

\newcommand{\cC}{\mathcal{C}}
\newcommand{\cD}{\mathcal{D}}

\newcommand{\cO}{\mathcal{O}}

\newcommand{\del}[1]{}

\newcommand{\R}{\mathbb{R}} 

\newcommand{\eqdef}{:=}



\newcommand{\Exp}[1]{{\rm E}\left[#1\right]}





\newtheorem{assumption}{Assumption}
\newtheorem{lemma}{Lemma}

\newtheorem{theorem}{Theorem}

\theoremstyle{plain}

\theoremstyle{definition}

\usepackage[T1]{fontenc}    
\usepackage{hyperref}       
\usepackage{url}            
\usepackage{booktabs}       
\usepackage{nicefrac}       
\usepackage{microtype}      
\usepackage{xcolor}         
\usepackage{xspace}
\usepackage{algorithm}
\usepackage{algpseudocode}
\renewcommand{\algorithmiccomment}[1]{\bgroup\hfill//~#1\egroup}

\usepackage{caption}
\usepackage{graphicx}
\usepackage{pifont}
\usepackage{makecell}
\usepackage{threeparttable,adjustbox}
\usepackage{listings}
\definecolor{mydarkgreen}{RGB}{39,130,67}

\definecolor{mydarkred}{RGB}{192,47,25}

\definecolor{myteal}{RGB}{27,158,119}
\definecolor{myorange}{RGB}{217,95,2}
\definecolor{myred}{RGB}{231,41,138}
\definecolor{mypurple}{RGB}{152,78,163}
\definecolor{myblue}{RGB}{55,126,184}
\definecolor{mygreen}{RGB}{0,100,0}
\definecolor{mydarkred}{RGB}{192,47,25}
\definecolor{sgd}{rgb}{0., 0.2, 0.75}
\definecolor{quant}{rgb}{0.55, 0., 0.}

\newcommand{\smartparagraph}[1]{\noindent{\bf #1}\ }

\newcommand{\ExpD}[2]{{\rm E}_{#1}\left[#2\right]}
\definecolor{LightGray}{gray}{0.9}

\usepackage[colorinlistoftodos,bordercolor=orange,backgroundcolor=orange!20,linecolor=orange,textsize=scriptsize]{todonotes}

\graphicspath{{./image/}}

\begin{CCSXML}
    <ccs2012>
       <concept>
           <concept_id>10010147.10010257</concept_id>
           <concept_desc>Computing methodologies~Machine learning</concept_desc>
           <concept_significance>500</concept_significance>
           </concept>
       <concept>
           <concept_id>10010147.10010919.10010172</concept_id>
           <concept_desc>Computing methodologies~Distributed algorithms</concept_desc>
           <concept_significance>300</concept_significance>
           </concept>
     </ccs2012>
\end{CCSXML}

\ccsdesc[500]{Computing methodologies~Machine learning}
\ccsdesc[300]{Computing methodologies~Distributed algorithms}

\keywords{Distributed Training, Gradient Compression}

\settopmatter{printacmref=true}

\makeatletter
\gdef\@copyrightpermission{
  \begin{minipage}{0.3\columnwidth}
   \href{https://creativecommons.org/licenses/by/4.0/}{\includegraphics[width=0.90\textwidth]{image/4ACM-CC-by-88x31.eps}}
  \end{minipage}\hfill
  \begin{minipage}{0.7\columnwidth}
   \href{https://creativecommons.org/licenses/by/4.0/}{This work is licensed under a Creative Commons Attribution International 4.0 License.}
  \end{minipage}
  \vspace{5pt}
}
\makeatother

\begin{document}

\title[Kimad: Adaptive Gradient Compression with Bandwidth Awareness]{Kimad: Adaptive Gradient Compression\\ with Bandwidth Awareness}

\author{Jihao Xin}
\authornote{Authors contributed equally to this work.}
\affiliation{%
  \institution{KAUST}
  \country{}
}

\author{Ivan Ilin}
\authornotemark[1]
\affiliation{%
  \institution{KAUST}
  \country{}
}

\author{Shunkang Zhang}
\affiliation{%
  \institution{HKUST}
  \country{}
}

\author{Marco Canini}
\affiliation{%
  \institution{KAUST}
  \country{}
}

\author{Peter Richt\'{a}rik}
\affiliation{%
  \institution{KAUST}
  \country{}
}

\begin{abstract}
In distributed training, communication often emerges as a bottleneck. In response, we introduce Kimad, a solution that offers adaptive gradient compression. By consistently monitoring bandwidth, Kimad refines compression ratios to match specific neural network layer requirements. Our exhaustive tests and proofs confirm Kimad's outstanding performance, establishing it as a benchmark in adaptive compression for distributed deep learning.

\end{abstract}

\maketitle

\section{Introduction}
Deep learning has steadily emerged as a transformative paradigm, demonstrating profound results in various domains. With its growth, there's been an explosion in the size of models and datasets. This upsurge in complexity often demands expansive computational resources, prompting researchers to adopt distributed training.

The Graphics Processing Unit (GPU) has emerged as a cornerstone in the realm of deep learning model training, fundamentally altering the landscape of artificial intelligence research and applications.  The latest advancements in GPU technology, exemplified by the state-of-the-art models such as the Ampere and Hopper~\cite{choquette2021nvidia, choquette2023nvidia}, exhibit unprecedented computational power and speed up the training by up to 16$\times$. However, it is noteworthy that the acquisition of these cutting-edge GPUs comes at a considerable financial cost, more than \$200,000 for a single DGX A100.

In this scenario, researchers increasingly turn to cloud-based computational resources for model training due to their flexible pricing models, variety of hardware, and ease of scaling computational resources. However, the bandwidth variability problem in cloud-based deep learning training poses a substantial challenge to the efficient execution of large-scale machine learning tasks~\cite{luo2020plink, shieh2011sharing, abdelmoniem2021dc2}. The bandwidth fluctuations, influenced by factors such as network congestion and competing workloads, lead to inconsistent performance during training. Figure \ref{fig:AWS} shows an example of bandwidth discrepancy measured at AWS EC2 with a TCP server in Frankfurt receiving simultaneously from 4 workers using IPerf3. While the existing framework CGX~\cite{markov2022cgx} has made strides by offering a comprehensive approach that integrates widely adopted gradient compression techniques and strikes a balance between accuracy and compression ratio, it failed to address dynamic bandwidth considerations. DC2~\cite{abdelmoniem2021dc2} achieves adaptive compression by inserting a shim layer between the ML framework and network stack to do real-time bandwidth monitoring and adjust the compression ratio. However, this approach is model-agnostic which cannot be used together with other application-level optimization.
\begin{figure}[t]
    \includegraphics[width=\linewidth]{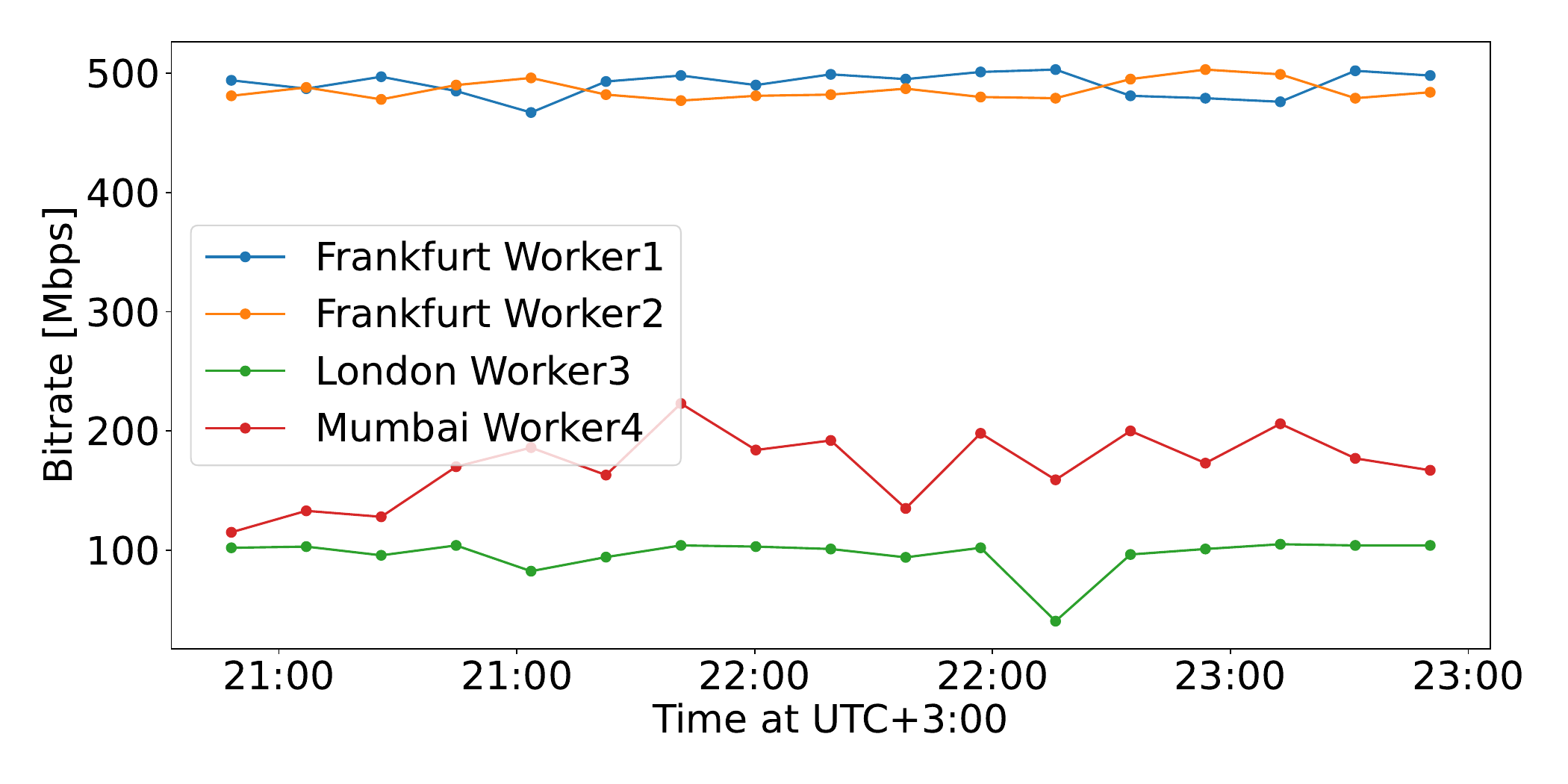}
    \captionsetup{aboveskip=0pt}
    \caption{EC2 bandwidth with TCP server at Frankfurt.}
    \label{fig:AWS}
\end{figure}
In addition to bandwidth adaptivity, numerous researchers are investigating how to capitalize on the diverse structural nature present across network layers in order to enhance compression ratios~\cite{lgreco, chen2018constraint}. However, these studies solely address the static nature of network structures and assume an ideally stable network connection, a scenario seldom encountered in real-world deployments.

In light of these findings, we introduce Kimad: an adaptive gradient compression system designed to be aware of both bandwidth changes and model structures. The comprehensive designs are depicted in Figure~\ref{fig:Kimad}. Kimad deploys a runtime bandwidth monitor and a compression module on each worker and server. Throughout the training phase, the bandwidth monitor gauges communication delays using historical statistics. Subsequently, the compression module utilizes the estimated bandwidth to compute the compression budget for the entire model. It then refines the layer-wise compression ratios while adhering to the overarching compression budget constraints.

In essence, we advance the following contributions:
\begin{itemize}[left=0pt,labelsep=10pt]
    \item We propose Kimad, a general framework for gradient compression adaptive to bandwidth changes.
    \item We further propose Kimad+, which incorporates an adaptive compression ratio tailored to layers to minimize compression errors.
    \item We expand the error feedback framework to elucidate its functionality within the context of Kimad.
    \item We evaluate from a convex function to a deep model, revealing that Kimad can accelerate the training while maintaining convergence.
\end{itemize}

\section{Background and Related Work}
\subsection{Data Parallelism}
Data parallelism is a widely used strategy to solve the distributed training problem, which can be formulated as \eqref{eq:main}.
\begin{equation}
\min_{x \in \R^{d}}  \left\{f(x) \eqdef \sum_{m=1}^{M} w_m f_m(x) \right\} \label{eq:main}
\end{equation}
$x\in \R^{d}$ corresponds to the parameters of a model, $[M]\eqdef \{1,\dots, M\}$ is the set of workers (e.g. GPUs, IoT devices) and $w_1, \dots, w_M$ are non-negative weights adding up to $1$ (for example,  the weights can be uniform, i.e., $w_m=\frac{1}{M}$ for all $m$). Further, $f_m(x)\eqdef \ExpD{\xi\sim \cD_m}{\ell(x,\xi)}$ is the empirical loss of model $x$ over the training data $\cD_m$ stored on worker $m$, where $\ell(x,\xi)$ is the loss of model $x$ on a single data point $\xi$.

In data parallelism, each worker keeps a copy of the model and a partition of the dataset. The gradients computed on each worker are then communicated to aggregate and update the model. We provide the general formulation to solve \eqref{eq:main}
in Appendix~\ref{app:formulation}.

In this work, we predominantly focus on the Parameter-Server model. Our choice is driven by its inherent capability to efficiently handle sparse updates \cite{omnireduce,parallax}, and its widespread adoption in environments with shared bandwidth like federated learning. While our emphasis is on the PS architecture with Data Parallelism, we posit that the adaptivity innovations we introduce can seamlessly integrate and offer value to the Peer-to-Peer architecture and model parallelism as well.

\subsection{Gradient Compression}
Relying on the nature that deep learning training can converge despite lossy information, gradient compression is a popular approach to speed up data-parallel training \cite{grace}. During back-propagation, gradients will be compressed before communication with the server, and the server will decompress the gradients prior to aggregating them; thus the communication cost can be largely reduced. Additionally, the server can distribute the model using compression as well. Gradient compression techniques can be generally categorized into three classes:
\begin{itemize}[left=0pt,labelsep=10pt]
    \item \textbf{Sparsification}~\cite{Suresh2017, RDME, Alistarh-EF-NIPS2018, stich2018sparsified, wang2018atomo}: Selectively retaining elements in gradients while zeroing others. This includes methods like Top$K$ (selecting the $K$ largest absolute value elements) and Rand$K$ (randomly selecting $K$ elements).
    \item \textbf{Quantization}~\cite{1bit, qsgd2017neurips, terngrad, mishchenko2019distributed, horvath2019natural}: Reducing data precision to fewer discrete values. Deep learning frameworks often use Floating Point 32 (FP32) for gradients, which can be compressed to formats like FP16, UINT8, or even 1 bit \cite{1bit}.
    \item \textbf{Low-Rank Decomposition}~\cite{vogels2019powersgd}: Approximating gradients by breaking them down into lower-rank matrices, reducing their size as $A \approx U \cdot V^T$, where $A$ is the original matrix, and $U$ and $V$ are lower-rank matrices.
\end{itemize}

\smartparagraph{Adaptive compression.}
Adaptive compression is an emerging area to study how to apply gradient compression efficiently with different compression levels \cite{accordion,abdelmoniem2021dc2}.  Gradient compression is traditionally used in an intuitive way: Given a compressor $\cC:\R^d\to\R^d$, gradients are compressed with a static strategy where the same compression ratio is used for each layer and across the whole training procedure. However, gradient compression has a different impact on different training stages. For instance, Accordion~\cite{accordion} selects between high and low compression levels by identifying the critical learning regimes. Furthermore, it is incumbent upon gradient compression methodologies to account for the diverse attributes of individual layers. For example, Egeria~\cite{egeria} methodically freezes layers that have achieved convergence during training. L-Greco~\cite{lgreco} uses dynamic programming to adjust layer-specific compression ratios given the error budget, reducing overall compressed size. Moreover, researchers should also take the system architecture into consideration. Notably, FlexReduce~\cite{FlexReduce} proposes that the communication protocol can be split into different portions unevenly based on the communication hierarchy.

\subsection{Error Feedback}
Error feedback (EF), also referred to as error compensation, is a widely adopted method for ensuring convergence stability in distributed training of supervised machine learning models. It is particularly effective when combined with biased compressors such as Top$K$.  EF was originally introduced as a heuristic~\cite{1bit}; then theoretical guarantees were proposed ~\cite{stich2018sparsified, Alistarh-EF-NIPS2018}. More recently, EF21~\cite{EF21,EF21BW, 3PC} provides theoretical analysis for distributed settings and achieves a state-of-the-art $\cO(1/T)$ convergence rate. We integrate EF21 into Kimad to achieve better convergence.

\subsection{Bandwidth Monitoring}
Bandwidth monitoring is critical in network management, especially in cloud-based scenarios. It addresses the need to monitor data transfer rates between computational nodes during training, ensuring optimal communication efficiency. Existing works~\cite{abdelmoniem2021dc2, caron2012auto, anand2012cloud} allow us to estimate the bandwidth changes by utilizing a collection of the network-level communication properties such as the latency. Particularly, adaptive strategies~\cite{xu2021bandwidth, wang2019adaptive} such as dynamic synchronization algorithms or buffering mechanisms can alleviate the effects of bandwidth fluctuation.

\section{Methodology}
We propose Kimad, an adaptive compression framework to accommodate varying bandwidth and model structures.
Kimad continuously monitors the bandwidth and dynamically adjusts the volume of communication size in each round for every machine. For instance, if the bandwidth $B_{m}^{k}$ for machine $m$ at step $k$ becomes limited compared to other devices, we instruct machine $m$ to employ a suitable compressor to reduce the size of the update vector\footnote{The update vector is the gradient under basic SGD setting. When applying EF21, it is the difference between the gradient estimation and the real gradient.} $u^k_m$  with the goal of ensuring that this machine does not become a straggler. Additionally, we present Kimad+, an extension of Kimad, which fine-tunes the compression ratio differently across layers. Kimad+ involves an additional step that introduces some computational overhead and is recommended for use when there is surplus computational capacity available (i.e., when communication is the most severe bottleneck).
\begin{figure*}[t]
  \centering
  \includegraphics[width=0.95\linewidth]{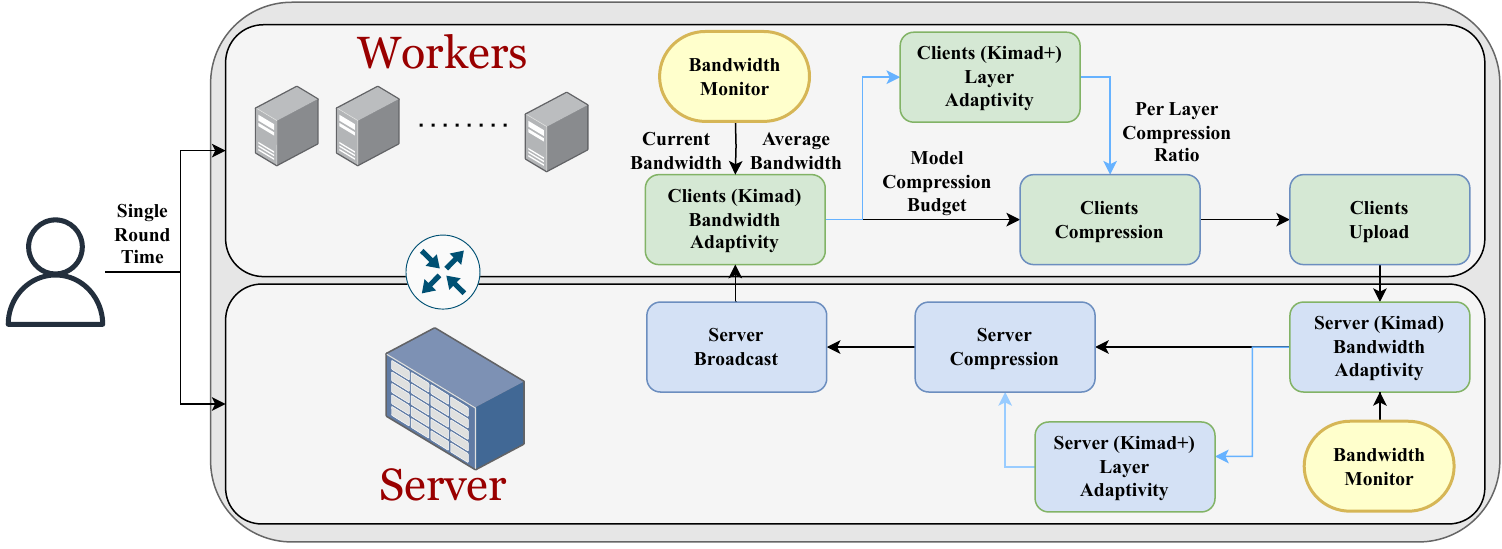}
  \caption{Kimad framework.}
  \label{fig:Kimad}
\end{figure*}

As Figure \ref{fig:Kimad} shows, to train a deep learning task, the end users need to inform Kimad of $t$, which is the \textbf{time budget} for a single communication round (a step).
The server and each worker will determine the compression strategy locally without knowing global information.
Kimad requires a bandwidth monitor, which is deployed on each worker and server, and will continuously monitor the network behavior and estimate the current bandwidth.
Kimad will calculate how many bits need to be communicated at each step based on the bandwidth, which we call the \textbf{compression budget} denoted as $c$. The blue arrows in Figure \ref{fig:Kimad} further represent Kimad+, which allocates the compression budget to each layer to minimize the compression error and thus improve accuracy.

Algorithm \ref{alg:Kimad} formulates the general version of Kimad.
The algorithm starts with the server broadcasting the latest compressed update $\cC^k( x^k - \hat{x}^{k-1})$, then each worker will calculate the update by $A_m^{\rm update}$ and upload the compressed update  $\cC_{m}^k(u_m^k-\hat{u}_m^{k-1} )$. Afterward, the server will update the model $x^k$ by the aggregated update vector.
The \textbf{core of the algorithm} is $A^{\rm compress}$, which selects a compressor from $\Omega$ in an {\bf adaptive} manner, based on the model information and current bandwidth estimation $B_m^k$. To recover accuracy, we apply bidirectional EF21, therefore, both server and workers maintain two estimators: $\hat{u}_m^k$  and $\hat{x}^k$, and only the server stores the global model $x^k$. We put a detailed version of the Kimad algorithm in Algorithm~\ref{alg:KimadFull} in Appendix \ref{app:kimad}.

\begin{algorithm}[h]
	\centering
		\caption{Kimad: Adaptive Gradient Compression with Bandwidth Awareness}
	\begin{algorithmic}[1]
		\State {\bf Input:} loss $\ell$; weights $w_m$; datasets $\cD_m$; model updating algorithm $A_m^{\rm update}$ on each worker $m\in [M]$; set of compressors $\Omega$; compressor-selection algorithm $A^{\rm compress}$ on server and workers; model $x^0\in \R^d$ known by the server; initial model estimator  $\hat{x}^{-1}\in \R^d$ and initial update estimators  $\hat{u}_{m}^{-1}\in \R^d$, known by the workers and the server;   single round time budget $t>0$; learning rate schedule $\{\gamma^k\} > 0$ for iterations $k\geq 0$.
	\For{{\bf each communication round} $k=0,1, 2, \dots $}

  \state //Server:
  \State Estimate $B^k$ at communication round $k$
  \State Select compressor: $\cC^k = A^{\rm compress}(\Omega, x^k, \hat{x}^{k-1}, B^k, t)$
  \State Update model estimator: $\hat{x}^{k} = \hat{x}^{k-1} + \cC^k( x^k - \hat{x}^{k-1})$
  \State Broadcasts the compressed vector $\cC^k( x^k - \hat{x}^{k-1})$ to all workers $m \in [M]$

\state //Workers:
\For{{\bf each worker} $m=1, 2, \dots, M $ {\bf in parallel}}			
            \State Update model estimator:
            \[\hat{x}^{k} = \hat{x}^{k-1} + \cC^k( x^k - \hat{x}^{k-1})\]
		\State Calculate update: $u_m^k = A_m^{\rm update} \left(\hat{x}^k, \ell, \cD_m \right) \in \R^d$
            \State Select compressor:
            \[\cC_m^k = A^{\rm compress}(\Omega, u_m^k, \hat{u}_m^{k-1}, B_{m}^k, t)\]
            \State  Upload the compressed vector $\cC_{m}^k(u_m^k-\hat{u}_m^{k-1} )$ to the server

		\EndFor
\State //Server:
\State Aggregate all update estimators: $$\hat{u}_m^k = \hat{u}_{m}^{k-1} + \cC_{m}^k(u_m^k-\hat{u}_m^{k-1} ), \qquad m\in [M]$$
\State Updates the model via
$x^{k+1} = x^k - \gamma^k \sum_{m=1}^M w_m \hat{u}_m^k$
		\EndFor
	\end{algorithmic}
\label{alg:Kimad}
\end{algorithm}

\subsection{Kimad: Bandwidth Adaptivity}
With a user-specified time budget $t$, the target of Kimad is to limit the training time at each step within $t$ time units while communicating as much information as possible.

In our work, we examine asymmetric networks, e.g., the up-link and down-link bandwidth can be different, and the bandwidth varies among workers. We apply bidirectional compression, i.e., both workers and server send compressed information.

We break down the time cost of worker $m$ at step $k$ as: $$t = T_{comm}^{u} + T_{comp} + T_{comm}^{d}$$
We abstract the computation time of a step as $T_{comp}$ which is assumed to be constant across a training task.
For the uplink communication, we define $T_{comm}^{u} = \frac{c}{B_{m}^{k}}$. For the downlink, we define $T_{comm}^{d} = \alpha \frac{c}{B_{m}^{k}}$ and $\alpha$ is the coefficient of broadcasting congestion which can be simply set to 1 assuming no congestion. Therefore, for simplicity, and without loss of generality, we only consider varying $T_{comm}$ to simulate various scenarios.

When communication is triggered, Kimad will read the current bandwidth from the bandwidth estimator and use it to calculate (with negligible computation overhead) the compression budget as:
\begin{equation}
    c = B_{m}^{k} \frac{t-T_{comp}}{2}.
    \label{eq:budget}
\end{equation}

\subsection{Kimad+: Layer Adaptivity}
With a predefined compression budget, Kimad+ can dynamically allocate the compression ratios of individual layers in a non-uniform manner. This optimization aims to enhance performance while ensuring that the cumulative compression ratio remains within the allocated budget. We start by formulating it as an optimization problem as:
\begin{equation}
\label{eq:optimization}
\min \varepsilon^{k} = \sum_{i=1}^{l}\varepsilon^{i},\quad \text{subject to} \sum_{i=1}^{\nlay} b_{i j_i^k}\leq c
\end{equation}
The target is to minimize the total error $\varepsilon^{k}$ across layers caused by compression, with compressed size constrained by the compression budget. We consider the standard Euclidean norm ($l_2$-norm) as the error indicator defined by:
\begin{equation}\label{eq:norm_decomposition}
\varepsilon^{k} = \norm{\hat{u}^k - u^k}^2 = \sum_{i=1}^{\nlay} \norm{\hat{u}^k_i - u_i}^2
\end{equation}

However, the relation between the compression error and compressed size is not deterministic, and the search space of the compression ratio is continuous. As a result, finding an analytical solution for this optimization problem is not feasible. To tackle this challenge, we employ a discretization approach, narrowing down the compression ratio search space. Specifically, for each layer, Kimad+ restricts its choice of compression ratio to a discrete set $\{1, 2, \ldots, w\}$. Therefore, \eqref{eq:optimization} can be written as:
\begin{eqnarray*} \min\limits_{j_1^k, \cdots, j_{\nlay}^k} && \varepsilon^{k}(j_1^k,\dots,j_{\nlay}^k) = \sum_{i=1}^{l}\varepsilon^{i}(j_i^k) \\
\text{subject to}  &&  j_1^k\in \{1,\dots,w_1\}, \cdots, j_{\nlay}^k\in \{1,\dots,w_{\nlay}\}\\
&& \sum_{i=1}^{\nlay} b_{i j_i^k}\leq c,
\end{eqnarray*}

We adopt the idea from L-Greco~\cite{lgreco} to formulate it as a knapsack problem. In contrast to L-Greco, Kimad+ uses the compression budget $c$ as the knapsack size and the compression error as the weight. Then, Kimad+ uses dynamic programming to solve the knapsack problem. The time complexity is $O(NKD)$ where $N$ is the number of layers, $K$ represents the possible compression ratios, and $D$ is the discretization factor for the error. We give the algorithm details in Appendix \ref{app:dp}.

\begin{figure*}[t!]
    \centering
    \begin{minipage}[b]{0.49\linewidth}
        \includegraphics[width=0.49\linewidth]{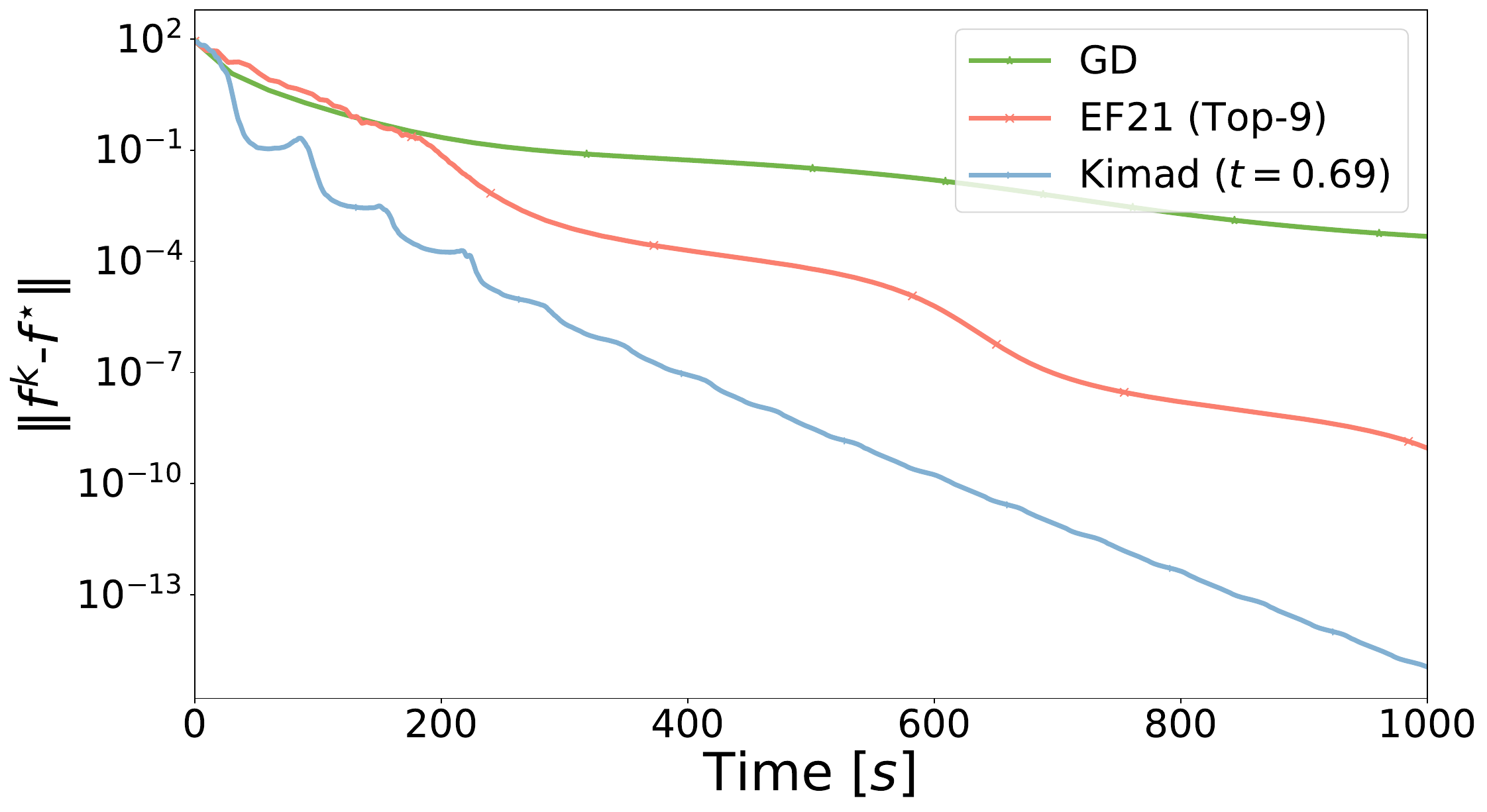}
        \includegraphics[width=0.49\linewidth]{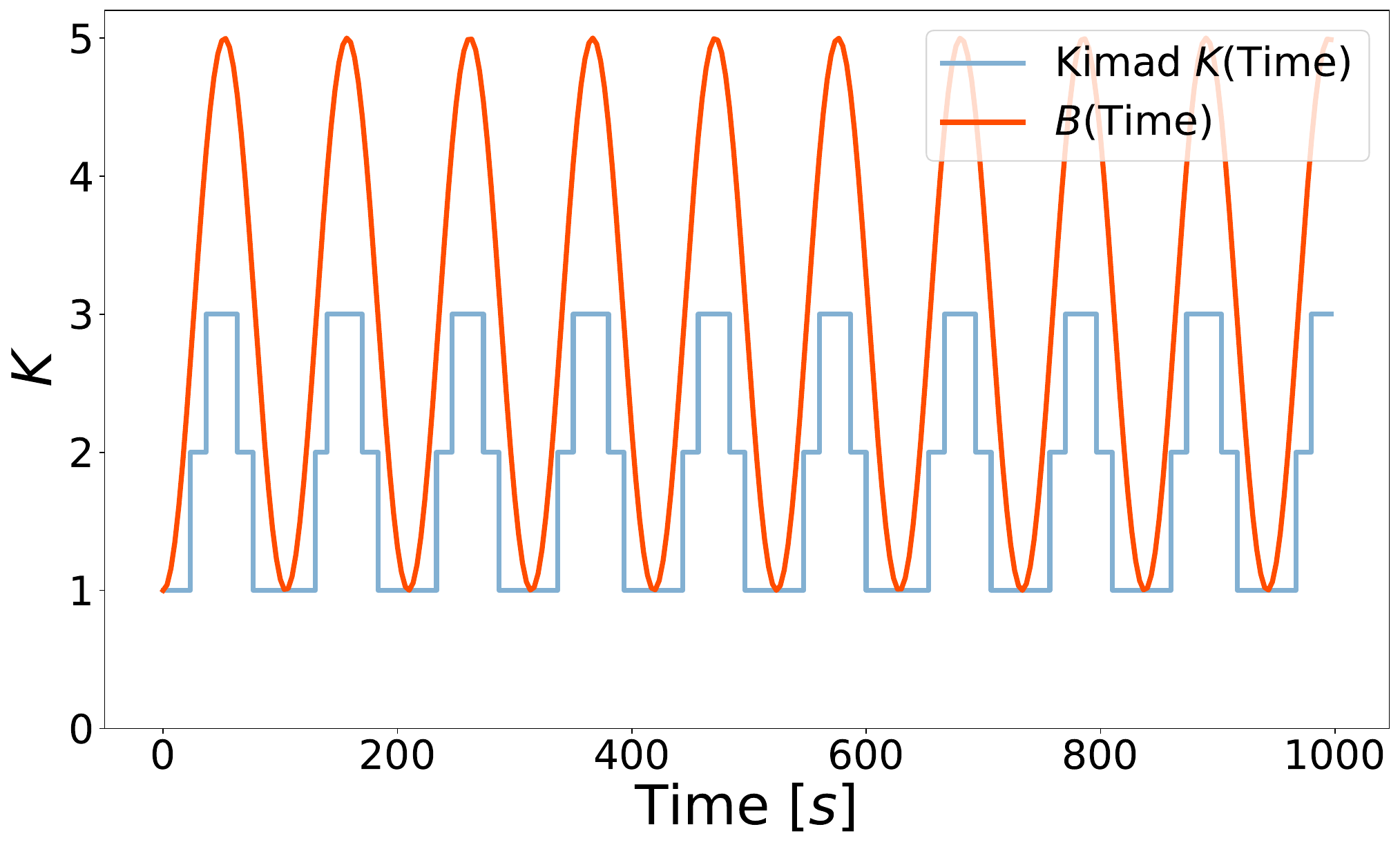}
        \captionsetup{width=.9\linewidth}
        \captionsetup{aboveskip=0pt}
        \caption{Extremely small bandwidth: $B_{max} << d$.}
        \label{fig:quad_1}
    \end{minipage}
    \begin{minipage}[b]{0.49\linewidth}
        \includegraphics[width=0.49\linewidth]{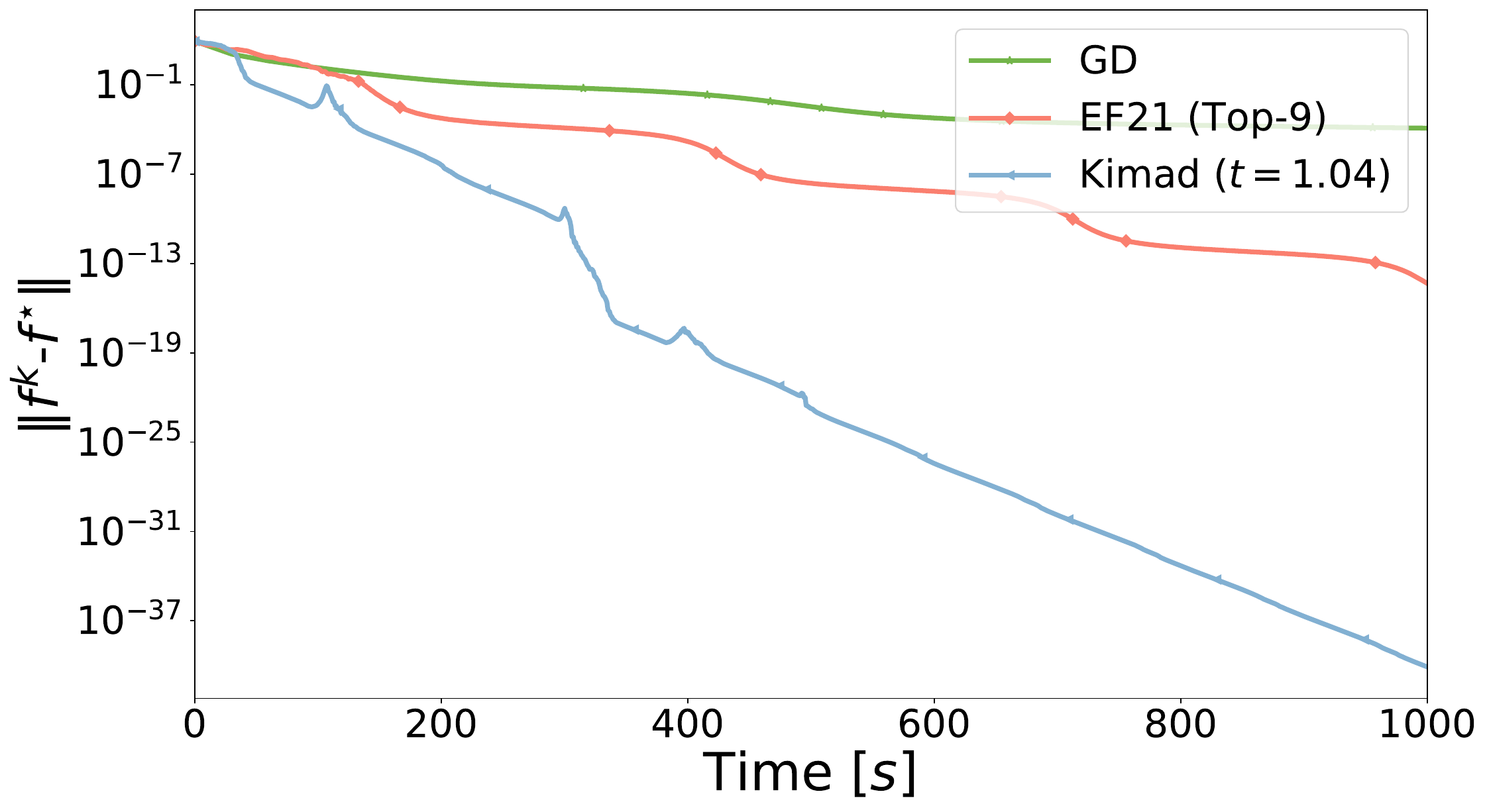}
        \includegraphics[width=0.49\linewidth]{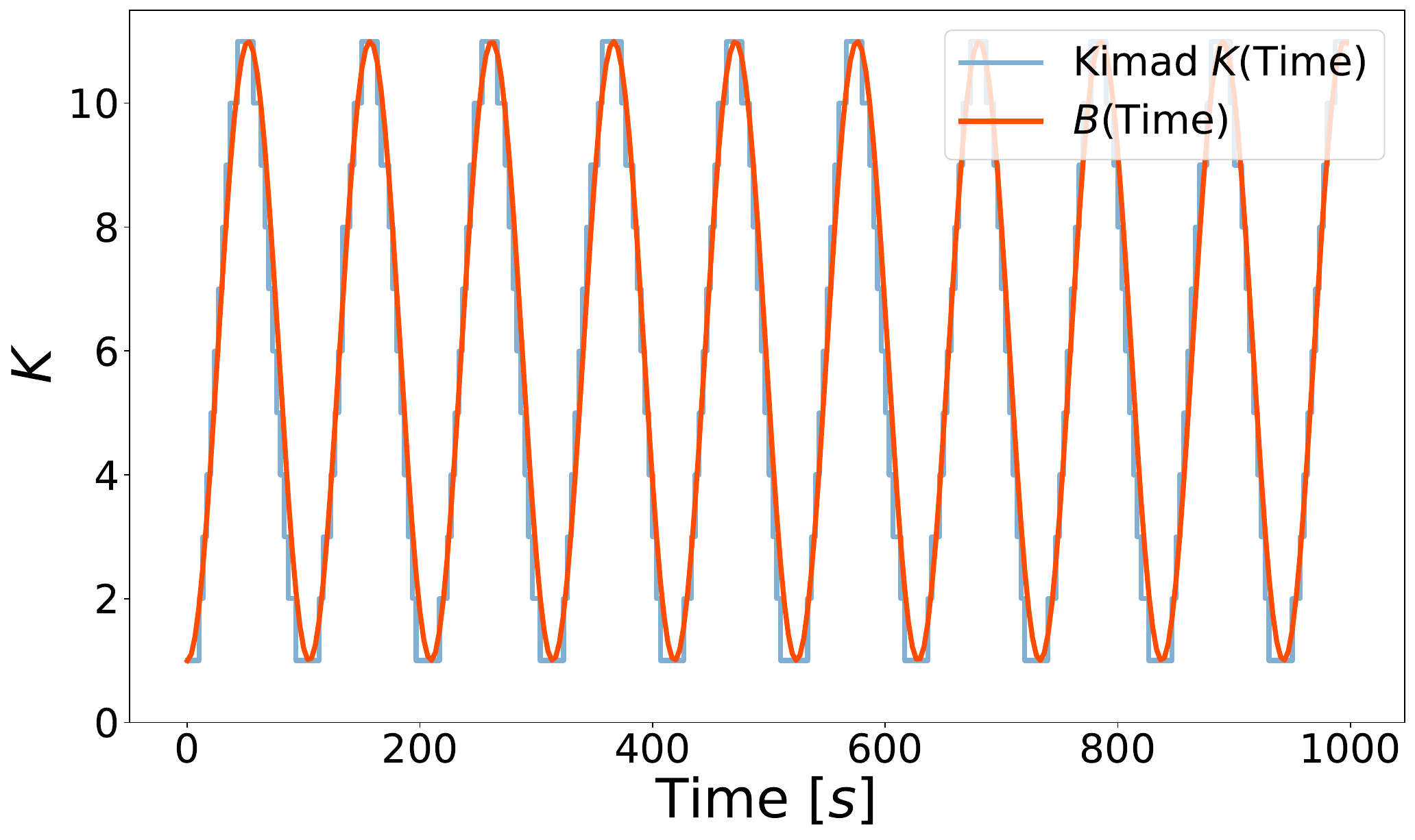}
        \captionsetup{width=.8\linewidth}
        \captionsetup{aboveskip=0pt}
        \caption{Small bandwidth: $B_{max} < d$.}
        \label{fig:quad_2}
    \end{minipage}
    \begin{minipage}[b]{0.49\linewidth}
        \includegraphics[width=0.49\linewidth]{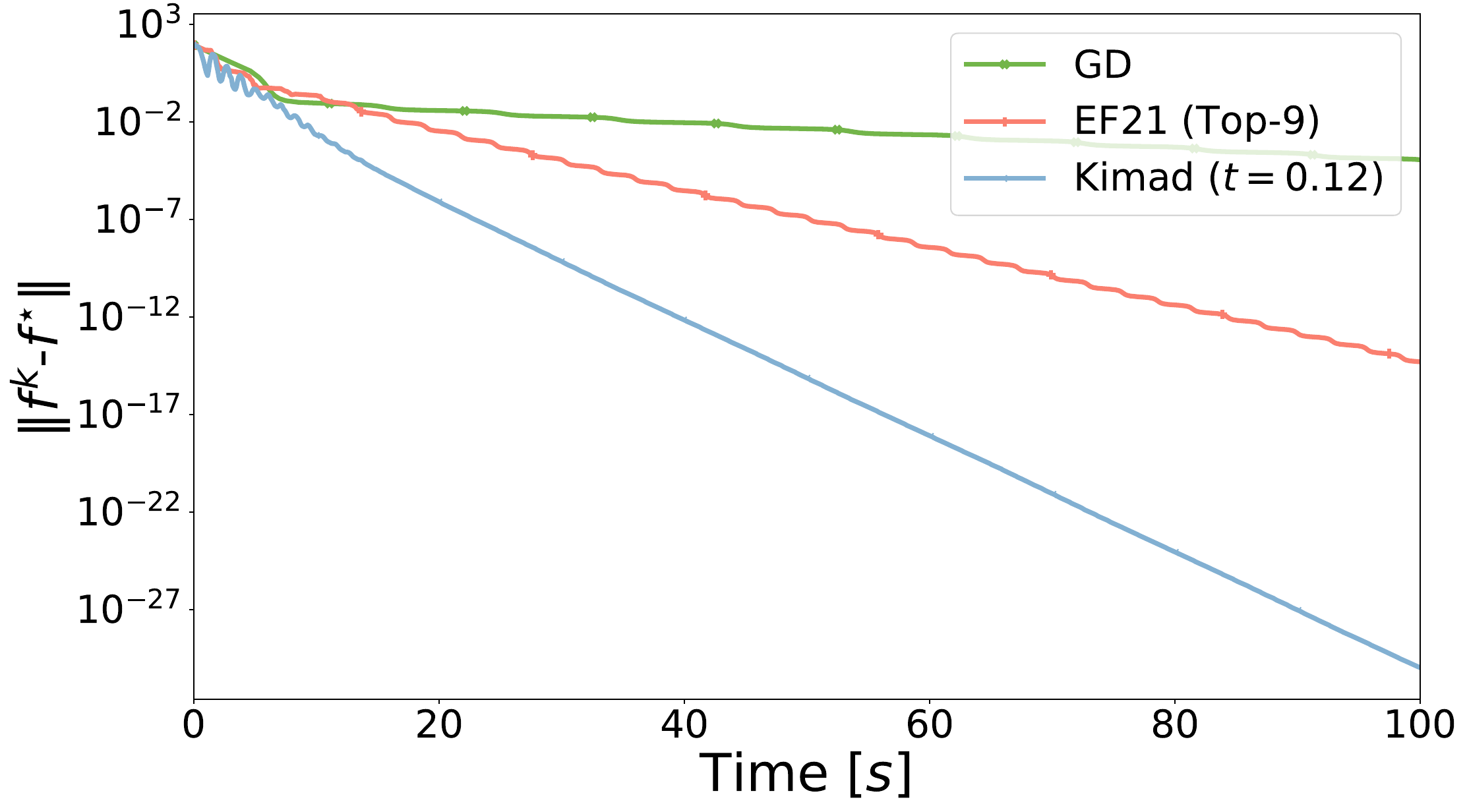}
        \includegraphics[width=0.49\linewidth]{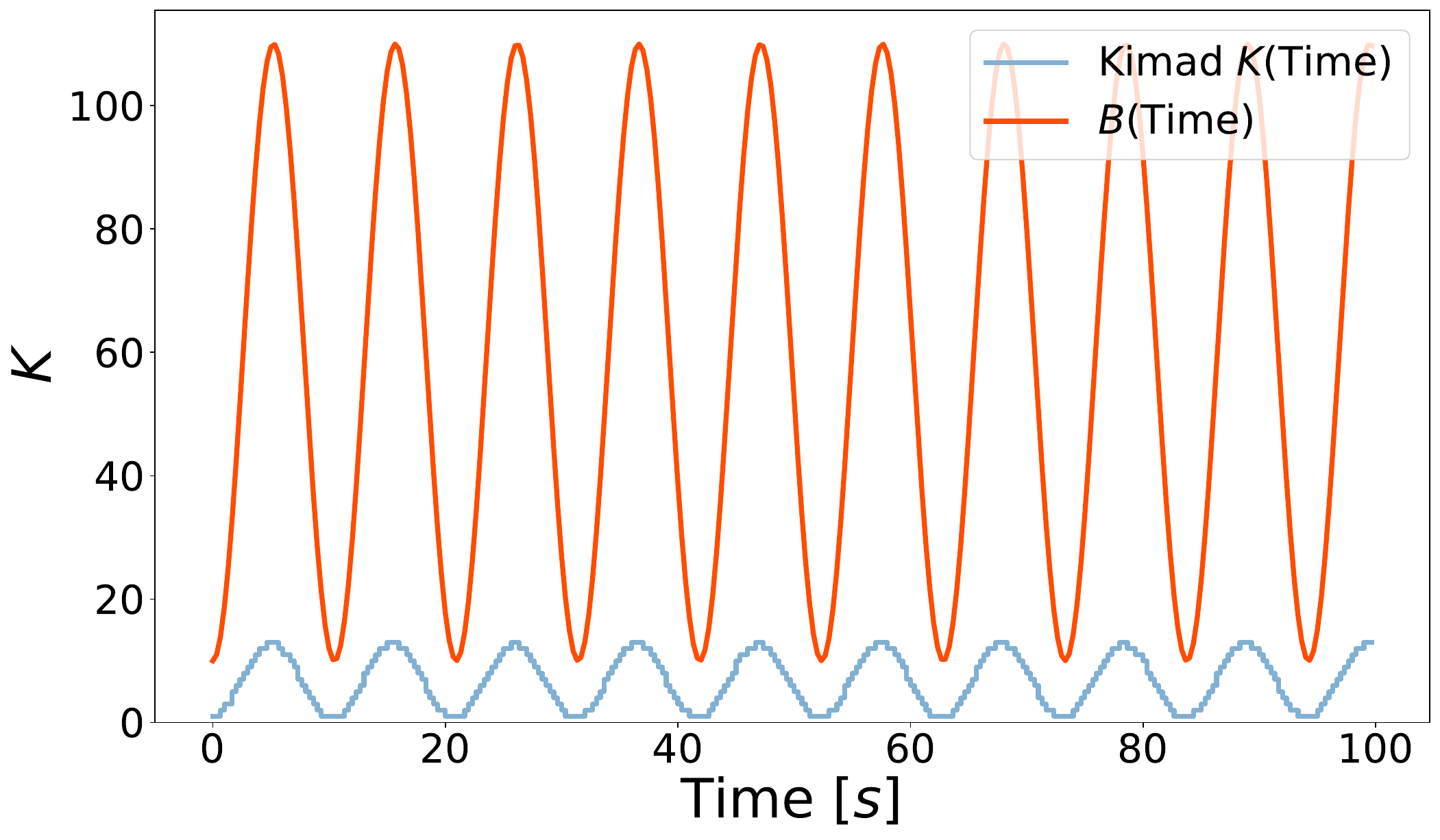}
        \captionsetup{width=.8\linewidth}
        \captionsetup{aboveskip=0pt}
        \caption{Oscillation between small and high bandwidth.}
        \label{fig:quad_3}
    \end{minipage}
    \begin{minipage}[b]{0.49\linewidth}
        \includegraphics[width=0.49\linewidth]{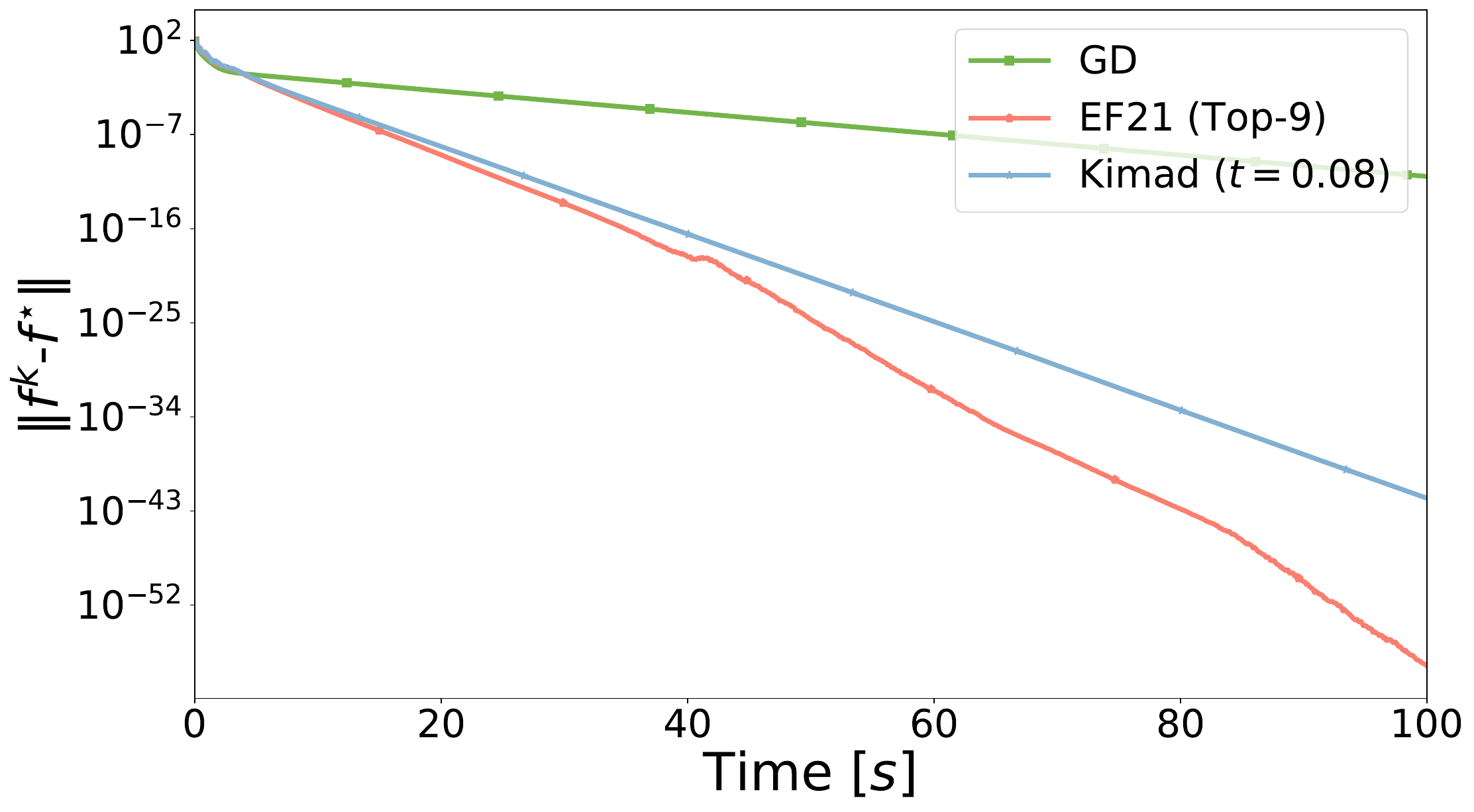}
        \includegraphics[width=0.49\linewidth]{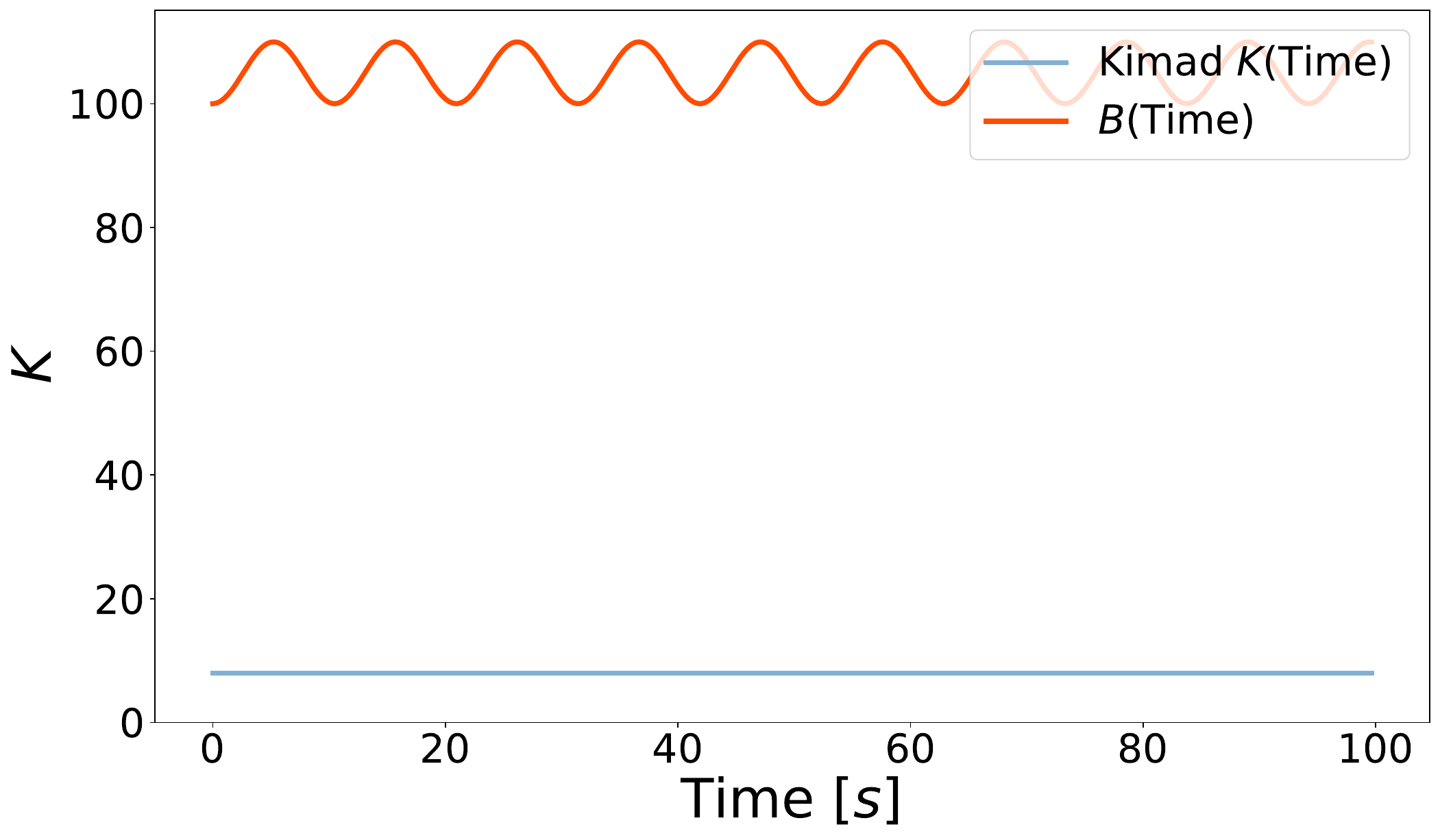}
        \captionsetup{width=.8\linewidth}
        \captionsetup{aboveskip=0pt}
        \caption{High bandwidth with small oscillations (almost no adaptation).}
        \label{fig:quad_4}
    \end{minipage}
\end{figure*}

\subsection{Error Feedback}
We apply error feedback within Kimad. To the best of our knowledge, EF21~\cite{EF21} is one of the most effective EF methods. We adapt EF21 and extend it in a layer-wise fashion. However, while EF21 is analyzed using a constant step size, our theory here allows the step size to depend on the layer $i$ and on the iteration $k$. Below we give the theoretical result; the proof is in Appendices \ref{app:assumptions}, \ref{app:lemma}, and \ref{app:proof}.

We initialize by choosing $x_i^0,\hat{u}_i^0\in \R^{d_i}$ for $i=1,\dots,\nlay$, $x^0=(x_1^0,\dots,x_{\nlay}^0)\in \R^d$. After this, for $k=0,1,\dots$ we iterate:
\begin{eqnarray}
x_i^{k+1} &=& x_i^k - \gamma_i^k \hat{u}_i^{k}, \quad i=1,\dots,\nlay,  \label{eq:EF21-1} \\
\hat{u}_i^{k+1} &=& \hat{u}_i^k + \cC_i^k (\nabla_i f(x^{k+1}) - \hat{u}_i^k ), \quad i=1,\dots,\nlay, \label{eq:EF21-2} \\
x^{k+1} &=& (x_1^{k+1},\dots,x_{\nlay}^{k+1}), \label{eq:EF21-3}
\end{eqnarray}
where  $\gamma_i^k\geq 0$ are step sizes and $\cC_i^k:\R^{d_i}\to \R^{d_i}$ are compressors for all $i=1,\dots,\nlay$ and $k\geq 0$. Let $\hat{u}^k = (\hat{u}_1^k,\dots, \hat{u}_{\nlay}^k)$.

\smartparagraph{Theory.} We state our main theorem below. We assume the model is layer-smooth (Appendix \ref{app:layer_smooth}) and the result extends to global-smooth (Appendix \ref{app:global_smooth}).

\begin{theorem} \label{thm:main} Consider Algorithm~\eqref{eq:EF21-1}--\eqref{eq:EF21-3}. Assume $f$ is lower bounded by $f^{\inf}\in \R$, and let the layer smoothness  (Assumption~\ref{ass:L_i}) and global smoothness (Assumption~\ref{ass:L}) conditions hold. Assume that $\cC_i^k \in \mathbb{C}^{d_i}(\alpha_i)$ for all $k\geq 0$ and  all $i \in [\nlay]$, where $\alpha_i \in (0,1]$. Choose any $\zeta_1,\dots,\zeta_{\nlay}>0$ such that $(1- \alpha_i )(1+\zeta_i) < 1$ for all $i$, and define
		\begin{equation}\label{eq:theta-beta-def-0}\theta_i \eqdef 1- (1- \alpha_i )(1+\zeta_i), \quad  \beta_i \eqdef (1- \alpha_i ) \left(1+ \zeta_i^{-1} \right)
		\end{equation}
and $\theta \eqdef \min_i \theta_i$. Choose any $w_1, \dots, w_{\nlay}>0$ and let  the step sizes be chosen via $\gamma_i^k \equiv \gamma w_i$ for all $k\geq 0$, where $\gamma >0$ satisfies
			\begin{equation} \label{eq:stepsize_main}\gamma^2 \frac{w_i \left(\max_i \frac{w_i}{\delta_i} \right)\left(\max_i \delta_i \beta_i  \right) L^2}{\theta} + \gamma L_i w_i \leq 1.\end{equation}

Then
\begin{align*}
    \frac{1}{K} \sum_{k=0}^{K-1}  & \left( \sum_{i=1}^{\nlay} w_i \Exp{\sqnorm{\nabla_i f(x^{k})}} \right) \leq \\
    \frac{2 (f(x^0) - f^{\inf})}{\gamma K} + &\frac{\left(\max_i \frac{w_i}{\delta_i} \right) \sum\limits_{i=1}^{\nlay} \delta_i  \sqnorm{\hat{u}_i^0 - \nabla_i f(x^{0})}}{\theta K}.
\end{align*}
\end{theorem}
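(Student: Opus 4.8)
\smartparagraph{Proof plan.} The plan is to adapt the Lyapunov-potential analysis behind \algname{EF21} to this blockwise, layer-varying-stepsize setting, carrying the layer constants $L_i,\alpha_i,w_i$ and the free parameters $\delta_i$ explicitly. Write $G_i^k \eqdef \sqnorm{\hat{u}_i^k - \nabla_i f(x^k)}$ and let $\mathcal{F}_k$ be the $\sigma$-algebra generated by everything up to (but not including) the draw of the compressors $\cC_i^k$. I would use the potential
\[
\Phi^k \eqdef f(x^k) - f^{\inf} + \frac{\gamma}{2\theta}\left(\max_{j}\frac{w_j}{\delta_j}\right)\sum_{i=1}^{\nlay}\delta_i\, G_i^k,
\]
and reduce the theorem to the one-step estimate $\Exp{\Phi^{k+1}\mid \mathcal{F}_k} \leq \Phi^k - \frac{\gamma}{2}\sum_{i=1}^{\nlay} w_i\sqnorm{\nabla_i f(x^k)}$. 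Granting this, taking total expectations, telescoping over $k=0,\dots,K-1$, discarding the nonnegative $\Phi^K$, dividing by $\gamma K/2$, and reading off $\Phi^0 = f(x^0)-f^{\inf} + \frac{\gamma}{2\theta}(\max_j w_j/\delta_j)\sum_i\delta_i\sqnorm{\hat u_i^0-\nabla_i f(x^0)}$ gives the stated bound verbatim; so the whole argument lives in that one-step estimate.

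The one-step estimate comes from combining two ingredients. First, a blockwise descent bound: layer-smoothness (Assumption~\ref{ass:L_i}) gives $f(x^{k+1})\leq f(x^k) + \sum_i\inner{\nabla_i f(x^k)}{x_i^{k+1}-x_i^k} + \sum_i\frac{L_i}{2}\sqnorm{x_i^{k+1}-x_i^k}$; inserting $x_i^{k+1}-x_i^k = -\gamma w_i\hat u_i^k$ and expanding the inner product via $-\inner{a}{b}=\tfrac12\sqnorm{a-b}-\tfrac12\sqnorm{a}-\tfrac12\sqnorm{b}$ (with $a=\nabla_i f(x^k)$, $b=\hat u_i^k$) produces
\[
f(x^{k+1})\leq f(x^k) - \tfrac{\gamma}{2}\textstyle\sum_i w_i\sqnorm{\nabla_i f(x^k)} + \tfrac{\gamma}{2}\sum_i w_i G_i^k - \tfrac{\gamma}{2}\sum_i w_i(1-\gamma L_i w_i)\sqnorm{\hat u_i^k}.
\]
Second, a compression-error recursion: since $\hat u_i^{k+1}-\nabla_i f(x^{k+1}) = \cC_i^k(v)-v$ with $v=\nabla_i f(x^{k+1})-\hat u_i^k$, the contraction $\cC_i^k\in\mathbb{C}^{d_i}(\alpha_i)$ combined with Young's inequality --- splitting $v$ as the gradient drift $\nabla_i f(x^{k+1})-\nabla_i f(x^k)$ (weight $1+\zeta_i^{-1}$) plus $-(\hat u_i^k-\nabla_i f(x^k))$ (weight $1+\zeta_i$) --- yields $\Exp{G_i^{k+1}\mid\mathcal{F}_k}\leq(1-\theta_i)G_i^k + \beta_i\sqnorm{\nabla_i f(x^{k+1})-\nabla_i f(x^k)}$. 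Scaling by $\delta_i$, summing over $i$, bounding $\sum_i\delta_i\beta_i\sqnorm{\nabla_i f(x^{k+1})-\nabla_i f(x^k)}\leq(\max_i\delta_i\beta_i)\sqnorm{\nabla f(x^{k+1})-\nabla f(x^k)}$, and then invoking global smoothness (Assumption~\ref{ass:L}) together with $x^{k+1}-x^k = (-\gamma w_i\hat u_i^k)_i$ turns this into $(\max_i\delta_i\beta_i)L^2\gamma^2\sum_j w_j^2\sqnorm{\hat u_j^k}$, the term that will have to be killed by the stepsize.

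Finally, I would add the descent bound to $\frac{1}{2\theta}(\max_j w_j/\delta_j)$ times the summed error recursion. The $G_i^k$ coefficients total $\tfrac{\gamma}{2}w_i + \tfrac{\gamma}{2\theta}(\max_j w_j/\delta_j)\,\delta_i(1-\theta_i)$, which is $\leq\tfrac{\gamma}{2\theta}(\max_j w_j/\delta_j)\,\delta_i$ exactly because $w_i/(\delta_i\theta_i)\leq\theta^{-1}\max_j(w_j/\delta_j)$ --- this is where $\theta=\min_i\theta_i$ is used --- so they are reabsorbed into $\Phi^k$. The $\sqnorm{\hat u_j^k}$ coefficients total $-\tfrac{\gamma}{2}w_j(1-\gamma L_j w_j) + \tfrac{\gamma}{2\theta}(\max_i w_i/\delta_i)(\max_i\delta_i\beta_i)L^2\gamma^2 w_j^2$, which is $\leq 0$ iff $\gamma^2\frac{w_j(\max_i w_i/\delta_i)(\max_i\delta_i\beta_i)L^2}{\theta} + \gamma L_j w_j \leq 1$, i.e.\ precisely the stepsize hypothesis~\eqref{eq:stepsize_main}. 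What then remains is $-\tfrac{\gamma}{2}\sum_i w_i\sqnorm{\nabla_i f(x^k)}$, which is the desired one-step inequality. I expect the only real difficulty to be the bookkeeping in this last step: one must orient Young's inequality so that $G_i^k$ (not the drift) carries the contractive factor $1-\theta_i<1$, choose the potential weights proportional to $\delta_i$ with precisely the constant $\frac{\gamma}{2\theta}\max_j(w_j/\delta_j)$, and pass from blockwise to global smoothness through a $\max_i$ rather than a $\sum_i$; doing any of these loosely turns \eqref{eq:stepsize_main} into a more restrictive condition or inflates the rate by a factor $\nlay$. The conditional-expectation manipulations (tower rule over the compressor randomness) are needed but entirely routine.
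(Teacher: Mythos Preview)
Your proposal is correct and follows essentially the same Lyapunov-potential route as the paper: the paper's proof combines the same descent bound (its Lemma~\ref{lem:98u09ufd_897}, which is your polarization computation rewritten in terms of $\sqnorm{x_i^{k+1}-x_i^k}$ rather than $\sqnorm{\hat{u}_i^k}$) with the same contraction recursion (its Lemma~\ref{lem:theta-beta}), and then adds the latter scaled by exactly your $\frac{\gamma}{2\theta}\max_j(w_j/\delta_j)$ to telescope the same potential $\Phi^k$. The only slip is a missing $\gamma$ in the phrase ``$\frac{1}{2\theta}(\max_j w_j/\delta_j)$ times the summed error recursion'' --- your subsequent coefficient bookkeeping already uses the correct $\frac{\gamma}{2\theta}$ factor, so this is purely typographical.
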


\section{Evaluation}
We begin our evaluation by initially performing synthetic experiments to showcase the efficiency of our proposed Kimad method, particularly to demonstrate that EF21 can work with compression ratio adaptive to bandwidth. The synthetic experiments are done with a simple quadratic function $f$ which is lower bounded by $0$, and has layer smoothness  (Appendix \ref{app:layer_smooth}) and global smoothness (Appendix \ref{app:global_smooth}). This function fits the theory assumptions and allows us to fine-tune the learning rates for all compression ratios and time budget $t$ at an affordable cost.
Subsequently, we present results from more practical tasks, demonstrating that Kimad is applicable to distributed deep learning training. We also conduct an evaluation of Kimad+ to substantiate its superior capabilities of reducing compression error compared to Kimad, all while maintaining the same communication cost.
The evaluation is simulation-based, running as a Parameter Server architecture with dynamic asymmetric bandwidth. We use TopK with fixed K as the default compression method. The simulator is tested with Python 3.9.15, and Pytorch 1.13.1.
\subsection{Synthetic Experiments}
For now, we consider only one direction; e.g., the down-link (server to worker) communication cost can be neglected. So, there is only an up-link bandwidth cost. We simulate the bandwidth oscillation with a sinusoid-like function as Figure~\ref{fig:quad_1}.

We start our experiments in a single-worker setup to optimize a quadratic function. So, $M=1$, $w_1=1$, $\cD_1 = a = (a_1,\dots,a_d)$, $a_i > 0$ for all $i$ and $f_1(x)\eqdef \ExpD{\xi\sim \cD_m}{\ell(x,\xi)} = \ell(x,\xi) = \frac{1}{2}\sum_{i=1}^d{a_i x_i^2}$, where $d = 30$. Hence $f(x)$ in problem \eqref{eq:main} can be written in the following form:
\begin{equation}
\label{eq:simple_quad}
f(x) = \frac{1}{2}\sum_{i=1}^d{a_i x_i^2}
\end{equation}

Previous works~\cite{EF21, EF21BW, 3PC} show that EF21 can improve convergence rates in federated learning setups, particularly for biased compressors such as TopK. We now demonstrate that EF21 can also be used to improve performance seamlessly with Kimad.
For a fair comparison, it's crucial to fine-tune all hyperparameters for each method. For EF21 with TopK, we systematically explored various K values and selected the one that performed the best for comparison with Kimad. However, Kimad doesn't require us to determine the best K since it adapts to the available bandwidth dynamically. Instead, we focus on optimizing the time budget parameter $t$ and fine-tuning Kimad in conjunction with EF21. We compare performance among Kimad, EF21, and set the standard gradient descent (GD) as the baseline.

As Figure~\ref{fig:quad_1} shows, Kimad can be much faster than the best EF21. We achieved these results because Kimad adapted the compress ratio depending on the bandwidth to be as effective as possible. These results are consistent over different bandwidth patterns: with small bandwidth ($B_{max} < d$) and high relative oscillation we can see great results because we gain more with using adaptive strategy (Figure~\ref{fig:quad_1} and Figure~\ref{fig:quad_2}). As the amplitude of the bandwidth oscillations becomes higher, we still have improvements in performance (Figure~\ref{fig:quad_3}). However, when the bandwidth is very high and the amplitude of its oscillations is low, we do not gain from adapting of compress ratio: there is no need to adapt because the bandwidth is not a bottleneck anymore (Figure~\ref{fig:quad_4}).
\begin{figure*}[t]
    \centering
    \begin{minipage}[b]{0.33\linewidth}
        \includegraphics[width=\linewidth]{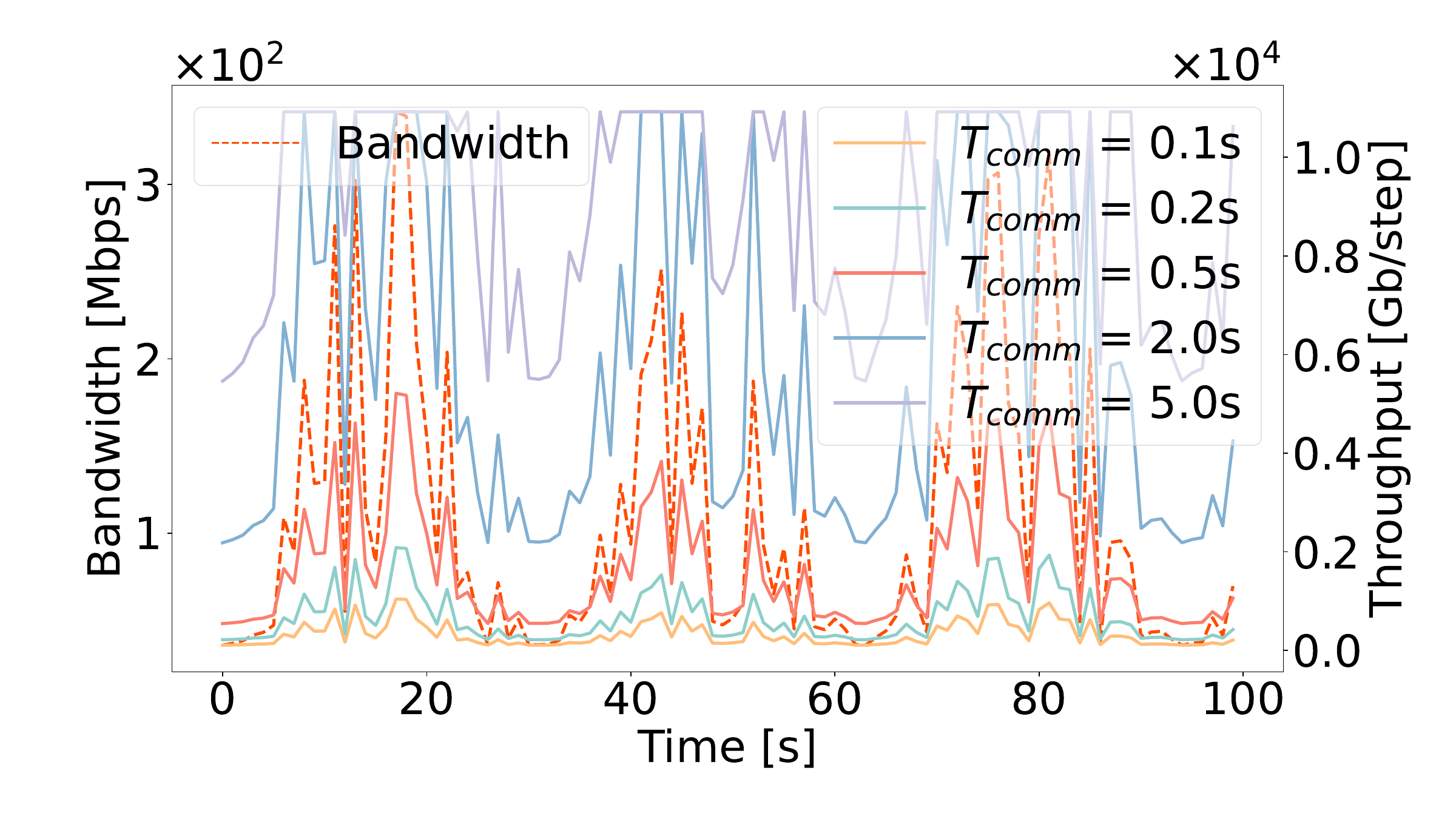}
        \captionsetup{width=.9\linewidth}
        \captionsetup{aboveskip=0pt}
        \caption{Communication throughput. $M=4$ workers.}
        \label{fig:ResNetBW}
    \end{minipage}
    \begin{minipage}[b]{0.33\linewidth}
        \includegraphics[width=\linewidth]{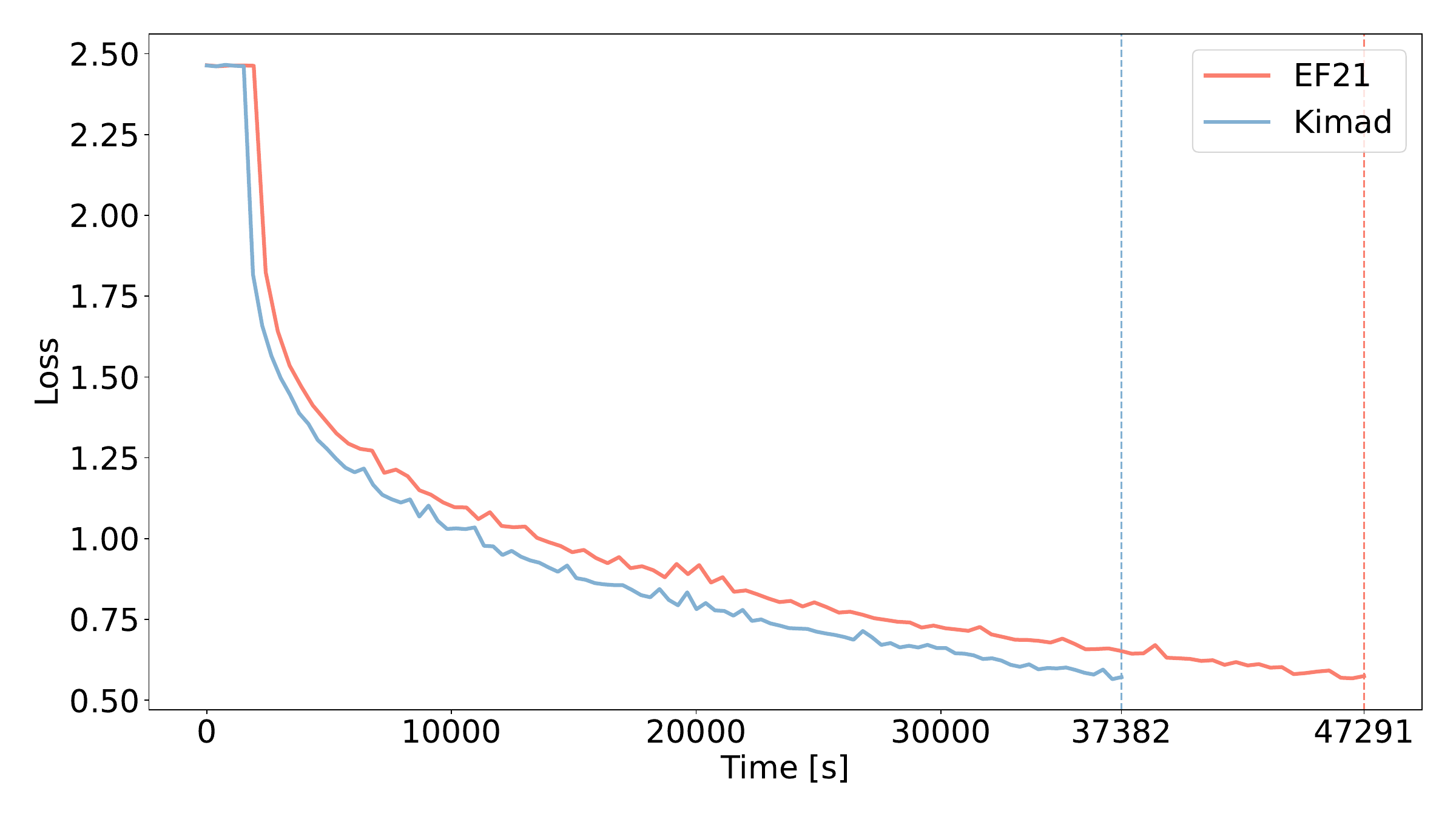}
        \captionsetup{width=.8\linewidth}
        \captionsetup{aboveskip=0pt}
        \caption{Loss curve. $M=4$ workers, $T_{comm}=1.0s$.}
        \label{fig:ResNetLoss}
    \end{minipage}
    \begin{minipage}[b]{0.33\linewidth}
        \includegraphics[width=\linewidth]{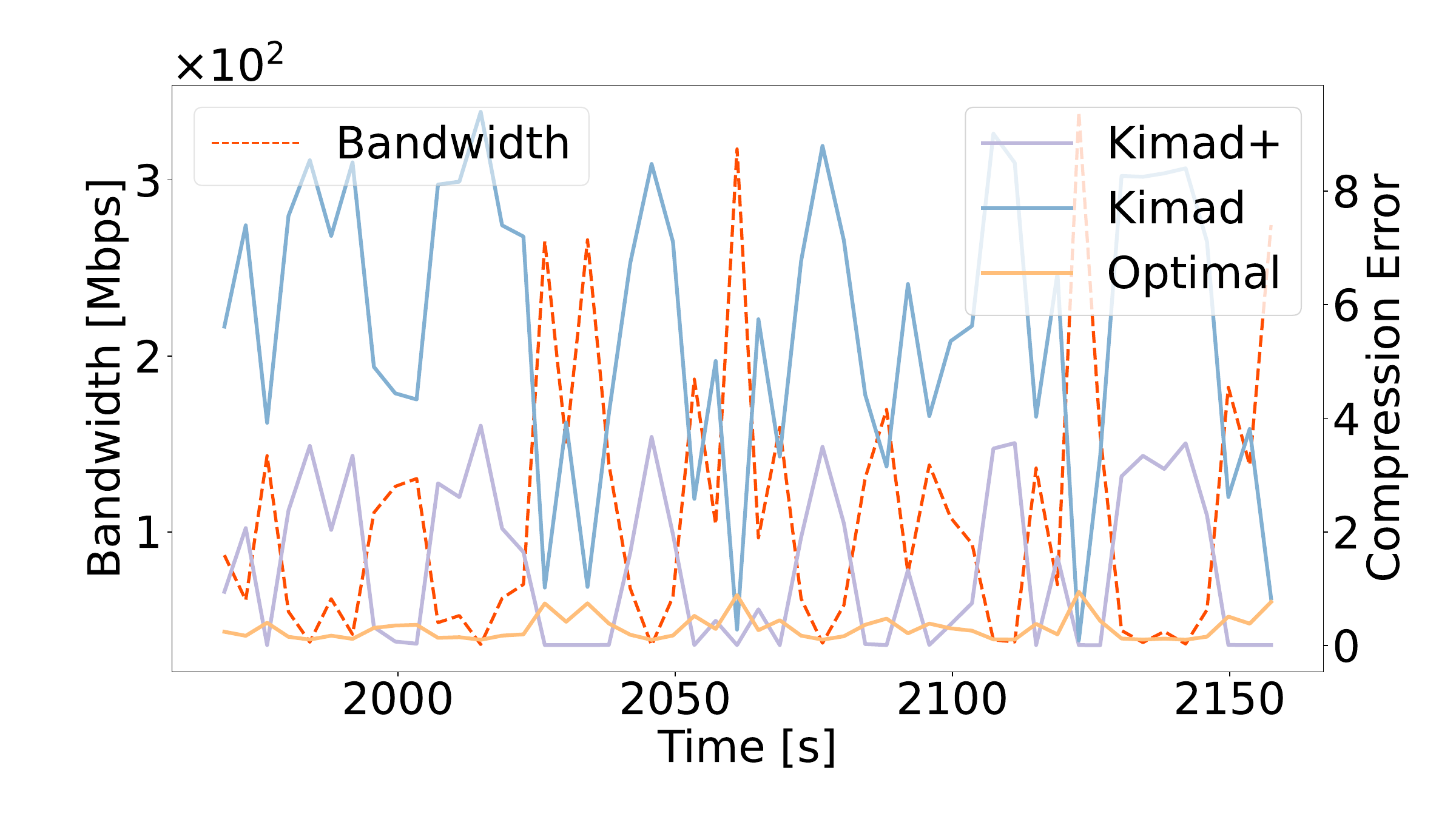}
        \captionsetup{width=0.9\linewidth}
        \captionsetup{aboveskip=0pt}
        \caption{Compression error. $M=4$ workers, $T_{comm}=1.0s$.}
        \label{fig:Kimad+}
    \end{minipage}
\end{figure*}

\subsection{Kimad on Deep Model}
\smartparagraph{Setting.} We train ResNet18 on Cifar10 for 100 epochs. We set $\{w_m = 1, \forall m \in M\}$, $\gamma=0.01$, $\Omega= \{TopK | K >0\}$, $A^{update}$ calculates gradients with batch size = 128, random seed=21. We conduct 5 epochs warmup training, thus $\hat{u}^k_m $ and $\hat{x^k}$ are initialized as $u^5_m$ and $x^5$ . Compression occurs on a per-layer basis, in accordance with common practice. We set $\alpha=1$ for the downlink so that the compression budget can be calculated by $ c = T_{comm} B_{m}^{k}$.
We set $T_{comp} = \frac{ModelSize}{AverageBandwidth}$.\footnote{Measured during the warmup epochs.} Our baseline is EF21 with fixed-ratio compression, which has the same overall communication size as Kimad but applies the same compression ratio across layers and steps.

\smartparagraph{Bandwidth.} In our simulation, we model the dynamic bandwidth within the range of 30 Mbps to 330 Mbps using the function: $Bandwidth(time) = \eta \sin (\theta \cdot time)^2 + \delta$, where $\eta$, $\theta$, $\delta$ are user-defined coefficients to adjust the changing frequency and amplitude.
We assume the bandwidth between the server and each worker follows the same patterns with different noise. The dashed curve in Figure \ref{fig:ResNetBW} shows the bandwidth pattern.

\smartparagraph{Communication adaptivity.}
Figure \ref{fig:ResNetBW} depicts a single worker's communication size over time for different T\textsubscript{comm}. The left y-axis represents bandwidth, while the right represents communication size. The plateau at the top signifies the maximum uncompressed size. This graph illustrates Kimad's effective adaptation to changing bandwidth conditions, thus optimizing communication throughout.

\smartparagraph{Convergence.}
The loss curve in Figure \ref{fig:ResNetLoss} shows the comparison with EF21. Kimad finishes training faster while achieving the same final convergence.
\begin{table}[t]
    \centering
    \begin{tabular}{|l|c|c|c|c|}
    \hline
     & \textbf{1.0s} & \textbf{0.5s} & \textbf{0.2s} & \textbf{0.1s} \\
    \hline
    EF21 & 486.1s & 360.6s & 284.2s & 258.0s \\
    \hline
    Kimad & 385.2s & 285.2s & 225.2s & 205.2s \\

    \hline
    \end{tabular}
    \caption{Average step time across T\textsubscript{comm}. $M=4$ workers.}
    \label{tb:steptime}
\end{table}

\smartparagraph{Speedup.}
Table \ref{tb:steptime} lists the average time of one SGD step across different T\textsubscript{comm}. In our setting, Kimad can generally save 20\% training time for different communication budgets.

\smartparagraph{Scalability.}
Table \ref{tb:Accuracy} presents the Top5 accuracy on the evaluation set after 100 epochs. Kimad demonstrates comparable scalability to EF21 which maintains good accuracy levels with increasing number of workers.

\subsection{Kimad+}
Kimad+ minimizes the compression error while maintaining the same compression ratio as Kimad. We train Kimad and Kimad+ under the same setting as above with error discretization factor 1000 and compression ratio chosen from $\{x \mid x = 0.01 + k \cdot 0.02, \text{ where } k \in \mathbb{Z}, 0.01 \leq x \leq 1\}$. Figure \ref{fig:Kimad+} shows the compression error at one worker in a time frame, the optimal baseline is to select K with the whole model information.
The compression error is negatively correlated with bandwidth, while Kimad+ can generally achieve lower compression error. We also observe that Kimad+ can achieve 1\% higher accuracy than EF21 after the training.
\begin{table}[t]
    \centering
    \begin{tabular}{|l|c|c|c|c|}
    \hline
    \textbf{}  & \textbf{2} & \textbf{4} & \textbf{8} & \textbf{16} \\
    \hline
    EF21  & 79.70\% & 79.59\% & 79.23\% & 77.97\% \\
    \hline
    Kimad & 79.34\% & 79.75\% & 78.74\% & 77.97\% \\
    \hline
    \end{tabular}
    \caption{Top5 accuracy across varying $M$. $T_{comm}=1s$.}
    \label{tb:Accuracy}
\end{table}
\section{Limitations And Future Work}
Kimad introduces a user-defined hyperparameter $t$, which is a trade-off between per-step time and accuracy and can also be adjusted dynamically. The learning rate can also be adjusted layer-wise
Besides, our work is not yet a fully implemented system. As the current experiments are simulation based, thus the implementation of monitor is trivial. We value the importance to integrate SOTA monitoring method to a complete work in the future. We can generalize the idea from splitting models to layers to blocks, where one block may contain many small layers. The computation overhead of Kimad+ is non-negligible and can be overlapped with communication. Moreover, LLM-targeted compression such as CocktailSGD~\cite{CocktailSGD} can also be considered.
\section{Conclusion}
We proposed Kimad, a bandwidth-aware gradient compression framework that comes with extended EF21. Kimad adapts the compression ratio based on the bandwidth and model characteristics; namely, each worker determines its local compression ratio considering its available bandwidth and time budget, and this ratio can be allocated to different layers in a non-uniform manner based on layer-wise sensitivity. We validated that Kimad can preserve the same convergence of fixed-ratio compression while saving communication time.

\subsection*{Acknowledgments}
This publication is based upon work supported by King Abdullah University of Science and Technology Research Funding (KRF) under Award No. ORA-2021-CRG9-4382.
For computer time, this research used the resources of the Supercomputing Laboratory at KAUST.

\bibliographystyle{ACM-Reference-Format}
\bibliography{main}


\begin{thebibliography}{37}


\ifx \showCODEN    \undefined \def \showCODEN     #1{\unskip}     \fi
\ifx \showDOI      \undefined \def \showDOI       #1{#1}\fi
\ifx \showISBNx    \undefined \def \showISBNx     #1{\unskip}     \fi
\ifx \showISBNxiii \undefined \def \showISBNxiii  #1{\unskip}     \fi
\ifx \showISSN     \undefined \def \showISSN      #1{\unskip}     \fi
\ifx \showLCCN     \undefined \def \showLCCN      #1{\unskip}     \fi
\ifx \shownote     \undefined \def \shownote      #1{#1}          \fi
\ifx \showarticletitle \undefined \def \showarticletitle #1{#1}   \fi
\ifx \showURL      \undefined \def \showURL       {\relax}        \fi
\providecommand\bibfield[2]{#2}
\providecommand\bibinfo[2]{#2}
\providecommand\natexlab[1]{#1}
\providecommand\showeprint[2][]{arXiv:#2}

\bibitem[Abdelmoniem and Canini(2021)]%
        {abdelmoniem2021dc2}
\bibfield{author}{\bibinfo{person}{Ahmed~M Abdelmoniem} {and} \bibinfo{person}{Marco Canini}.} \bibinfo{year}{2021}\natexlab{}.
\newblock \showarticletitle{DC2: Delay-aware compression control for distributed machine learning}. In \bibinfo{booktitle}{\emph{INFOCOM}}.
\newblock


\bibitem[Agarwal et~al\mbox{.}(2021)]%
        {accordion}
\bibfield{author}{\bibinfo{person}{Saurabh Agarwal}, \bibinfo{person}{Hongyi Wang}, \bibinfo{person}{Kangwook Lee}, \bibinfo{person}{Shivaram Venkataraman}, {and} \bibinfo{person}{Dimitris Papailiopoulos}.} \bibinfo{year}{2021}\natexlab{}.
\newblock \showarticletitle{Accordion: Adaptive Gradient Communication via Critical Learning Regime Identification}. In \bibinfo{booktitle}{\emph{MLSys}}.
\newblock


\bibitem[Alimohammadi et~al\mbox{.}(2022)]%
        {lgreco}
\bibfield{author}{\bibinfo{person}{Mohammadreza Alimohammadi}, \bibinfo{person}{Ilia Markov}, \bibinfo{person}{Elias Frantar}, {and} \bibinfo{person}{Dan Alistarh}.} \bibinfo{year}{2022}\natexlab{}.
\newblock \bibinfo{title}{L-GreCo: An Efficient and General Framework for Layerwise-Adaptive Gradient Compression}.
\newblock
\newblock
\showeprint[arxiv]{2210.17357}~[cs.LG]


\bibitem[Alistarh et~al\mbox{.}(2017)]%
        {qsgd2017neurips}
\bibfield{author}{\bibinfo{person}{Dan Alistarh}, \bibinfo{person}{Demjan Grubic}, \bibinfo{person}{Jerry Li}, \bibinfo{person}{Ryota Tomioka}, {and} \bibinfo{person}{Milan Vojnovic}.} \bibinfo{year}{2017}\natexlab{}.
\newblock \showarticletitle{QSGD: Communication-Efficient SGD via Gradient Quantization and Encoding}. In \bibinfo{booktitle}{\emph{NeurIPS}}.
\newblock


\bibitem[Alistarh et~al\mbox{.}(2018)]%
        {Alistarh-EF-NIPS2018}
\bibfield{author}{\bibinfo{person}{Dan Alistarh}, \bibinfo{person}{Torsten Hoefler}, \bibinfo{person}{Mikael Johansson}, \bibinfo{person}{Sarit Khirirat}, \bibinfo{person}{Nikola Konstantinov}, {and} \bibinfo{person}{C\'{e}dric Renggli}.} \bibinfo{year}{2018}\natexlab{}.
\newblock \showarticletitle{The Convergence of Sparsified Gradient Methods}. In \bibinfo{booktitle}{\emph{NeurIPS}}.
\newblock


\bibitem[Anand(2012)]%
        {anand2012cloud}
\bibfield{author}{\bibinfo{person}{Manu Anand}.} \bibinfo{year}{2012}\natexlab{}.
\newblock \showarticletitle{Cloud monitor: Monitoring applications in cloud}. In \bibinfo{booktitle}{\emph{CCEM}}.
\newblock


\bibitem[Caron et~al\mbox{.}(2012)]%
        {caron2012auto}
\bibfield{author}{\bibinfo{person}{Eddy Caron}, \bibinfo{person}{Luis Rodero-Merino}, \bibinfo{person}{Fr{\'e}d{\'e}ric Desprez}, {and} \bibinfo{person}{Adrian Muresan}.} \bibinfo{year}{2012}\natexlab{}.
\newblock \emph{\bibinfo{title}{Auto-scaling, load balancing and monitoring in commercial and open-source clouds}}.
\newblock \bibinfo{thesistype}{Ph.\,D. Dissertation}. \bibinfo{school}{INRIA}.
\newblock


\bibitem[Chen et~al\mbox{.}(2018)]%
        {chen2018constraint}
\bibfield{author}{\bibinfo{person}{Changan Chen}, \bibinfo{person}{Frederick Tung}, \bibinfo{person}{Naveen Vedula}, {and} \bibinfo{person}{Greg Mori}.} \bibinfo{year}{2018}\natexlab{}.
\newblock \showarticletitle{Constraint-aware deep neural network compression}. In \bibinfo{booktitle}{\emph{ECCV}}.
\newblock


\bibitem[Choquette(2023)]%
        {choquette2023nvidia}
\bibfield{author}{\bibinfo{person}{Jack Choquette}.} \bibinfo{year}{2023}\natexlab{}.
\newblock \showarticletitle{NVIDIA Hopper H100 GPU: Scaling Performance}.
\newblock \bibinfo{journal}{\emph{IEEE Micro}} \bibinfo{volume}{43}, \bibinfo{number}{3} (\bibinfo{year}{2023}), \bibinfo{pages}{9--17}.
\newblock


\bibitem[Choquette et~al\mbox{.}(2021)]%
        {choquette2021nvidia}
\bibfield{author}{\bibinfo{person}{Jack Choquette}, \bibinfo{person}{Wishwesh Gandhi}, \bibinfo{person}{Olivier Giroux}, \bibinfo{person}{Nick Stam}, {and} \bibinfo{person}{Ronny Krashinsky}.} \bibinfo{year}{2021}\natexlab{}.
\newblock \showarticletitle{NVIDIA A100 Tensor Core GPU: Performance and Innovation}.
\newblock \bibinfo{journal}{\emph{IEEE Micro}} \bibinfo{volume}{41}, \bibinfo{number}{2} (\bibinfo{year}{2021}), \bibinfo{pages}{29--35}.
\newblock


\bibitem[Fatkhullin et~al\mbox{.}(2021)]%
        {EF21BW}
\bibfield{author}{\bibinfo{person}{Ilyas Fatkhullin}, \bibinfo{person}{Igor Sokolov}, \bibinfo{person}{Eduard Gorbunov}, \bibinfo{person}{Zhize Li}, {and} \bibinfo{person}{Peter Richt{\'a}rik}.} \bibinfo{year}{2021}\natexlab{}.
\newblock \bibinfo{title}{EF21 with Bells \& Whistles: Practical Algorithmic Extensions of Modern Error Feedback}.
\newblock
\newblock
\showeprint[arxiv]{2110.03294}~[cs.LG]


\bibitem[Fei et~al\mbox{.}(2021)]%
        {omnireduce}
\bibfield{author}{\bibinfo{person}{Jiawei Fei}, \bibinfo{person}{Chen-Yu Ho}, \bibinfo{person}{Atal~Narayan Sahu}, \bibinfo{person}{Marco Canini}, {and} \bibinfo{person}{Amedeo Sapio}.} \bibinfo{year}{2021}\natexlab{}.
\newblock \showarticletitle{Efficient Sparse Collective Communication and its application to Accelerate Distributed Deep Learning}. In \bibinfo{booktitle}{\emph{SIGCOMM}}.
\newblock


\bibitem[Horv{\'a}th et~al\mbox{.}(2022)]%
        {horvath2019natural}
\bibfield{author}{\bibinfo{person}{Samuel Horv{\'a}th}, \bibinfo{person}{Chen-Yu Ho}, \bibinfo{person}{Ludovit Horvath}, \bibinfo{person}{Atal~Narayan Sahu}, \bibinfo{person}{Marco Canini}, {and} \bibinfo{person}{Peter Richt{\'a}rik}.} \bibinfo{year}{2022}\natexlab{}.
\newblock \showarticletitle{Natural Compression for Distributed Deep Learning}. In \bibinfo{booktitle}{\emph{MSML}}.
\newblock


\bibitem[Khaled et~al\mbox{.}(2019)]%
        {localGD}
\bibfield{author}{\bibinfo{person}{Ahmed Khaled}, \bibinfo{person}{Konstantin Mishchenko}, {and} \bibinfo{person}{Peter Richt\'{a}rik}.} \bibinfo{year}{2019}\natexlab{}.
\newblock \showarticletitle{First Analysis of Local {GD} on Heterogeneous Data}. In \bibinfo{booktitle}{\emph{NeurIPS Workshop on Federated Learning for Data Privacy and Confidentiality}}.
\newblock


\bibitem[Khaled et~al\mbox{.}(2020)]%
        {localSGD-AISTATS2020}
\bibfield{author}{\bibinfo{person}{Ahmed Khaled}, \bibinfo{person}{Konstantin Mishchenko}, {and} \bibinfo{person}{Peter Richt\'{a}rik}.} \bibinfo{year}{2020}\natexlab{}.
\newblock \showarticletitle{Tighter Theory for Local SGD on Identical and Heterogeneous Data}. In \bibinfo{booktitle}{\emph{AISTATS}}.
\newblock


\bibitem[Kim et~al\mbox{.}(2019)]%
        {parallax}
\bibfield{author}{\bibinfo{person}{Soojeong Kim}, \bibinfo{person}{Gyeong{-}In Yu}, \bibinfo{person}{Hojin Park}, \bibinfo{person}{Sungwoo Cho}, \bibinfo{person}{Eunji Jeong}, \bibinfo{person}{Hyeonmin Ha}, \bibinfo{person}{Sanha Lee}, \bibinfo{person}{Joo~Seong Jeong}, {and} \bibinfo{person}{Byung{-}Gon Chun}.} \bibinfo{year}{2019}\natexlab{}.
\newblock \showarticletitle{Parallax: Sparsity-aware Data Parallel Training of Deep Neural Networks}. In \bibinfo{booktitle}{\emph{EuroSys}}.
\newblock


\bibitem[Kone\v{c}n\'{y} and Richt\'{a}rik(2018)]%
        {RDME}
\bibfield{author}{\bibinfo{person}{Jakub Kone\v{c}n\'{y}} {and} \bibinfo{person}{Peter Richt\'{a}rik}.} \bibinfo{year}{2018}\natexlab{}.
\newblock \showarticletitle{Randomized distributed mean estimation: accuracy vs communication}.
\newblock \bibinfo{journal}{\emph{Frontiers in Applied Mathematics and Statistics}} \bibinfo{volume}{4}, \bibinfo{number}{62} (\bibinfo{year}{2018}), \bibinfo{pages}{1--11}.
\newblock


\bibitem[Lee et~al\mbox{.}(2020)]%
        {FlexReduce}
\bibfield{author}{\bibinfo{person}{Jinho Lee}, \bibinfo{person}{Inseok Hwang}, \bibinfo{person}{Soham Shah}, {and} \bibinfo{person}{Minsik Cho}.} \bibinfo{year}{2020}\natexlab{}.
\newblock \showarticletitle{FlexReduce: Flexible All-reduce for Distributed Deep Learning on Asymmetric Network Topology}. In \bibinfo{booktitle}{\emph{DAC}}.
\newblock


\bibitem[Li et~al\mbox{.}(2020)]%
        {Li2019OnTC}
\bibfield{author}{\bibinfo{person}{Xiang Li}, \bibinfo{person}{Kaixuan Huang}, \bibinfo{person}{Wenhao Yang}, \bibinfo{person}{Shusen Wang}, {and} \bibinfo{person}{Zhihua Zhang}.} \bibinfo{year}{2020}\natexlab{}.
\newblock \showarticletitle{On the Convergence of FedAvg on Non-IID Data}. In \bibinfo{booktitle}{\emph{ICLR}}.
\newblock


\bibitem[Luo et~al\mbox{.}(2020)]%
        {luo2020plink}
\bibfield{author}{\bibinfo{person}{Liang Luo}, \bibinfo{person}{Peter West}, \bibinfo{person}{Arvind Krishnamurthy}, \bibinfo{person}{Luis Ceze}, {and} \bibinfo{person}{Jacob Nelson}.} \bibinfo{year}{2020}\natexlab{}.
\newblock \bibinfo{title}{PLink: Discovering And Exploiting Datacenter Network Locality For Efficient Cloud-based Distributed Training}.
\newblock
\newblock


\bibitem[Markov et~al\mbox{.}(2022)]%
        {markov2022cgx}
\bibfield{author}{\bibinfo{person}{Ilia Markov}, \bibinfo{person}{Hamidreza Ramezanikebrya}, {and} \bibinfo{person}{Dan Alistarh}.} \bibinfo{year}{2022}\natexlab{}.
\newblock \showarticletitle{CGX: Adaptive System Support for Communication-Efficient Deep Learning}. In \bibinfo{booktitle}{\emph{Middleware}}.
\newblock


\bibitem[Mishchenko et~al\mbox{.}(2019)]%
        {mishchenko2019distributed}
\bibfield{author}{\bibinfo{person}{Konstantin Mishchenko}, \bibinfo{person}{Eduard Gorbunov}, \bibinfo{person}{Martin Tak{\'a}{\v{c}}}, {and} \bibinfo{person}{Peter Richt{\'a}rik}.} \bibinfo{year}{2019}\natexlab{}.
\newblock \bibinfo{title}{Distributed Learning with Compressed Gradient Differences}.
\newblock
\newblock
\showeprint[arxiv]{1901.09269}~[cs.LG]


\bibitem[Richt\'{a}rik et~al\mbox{.}(2021)]%
        {EF21}
\bibfield{author}{\bibinfo{person}{Peter Richt\'{a}rik}, \bibinfo{person}{Igor Sokolov}, {and} \bibinfo{person}{Ilyas Fatkhullin}.} \bibinfo{year}{2021}\natexlab{}.
\newblock \showarticletitle{EF21: A New, Simpler, Theoretically Better, and Practically Faster Error Feedback}. In \bibinfo{booktitle}{\emph{NeurIPS}}.
\newblock


\bibitem[Richt{\'a}rik et~al\mbox{.}(2022)]%
        {3PC}
\bibfield{author}{\bibinfo{person}{Peter Richt{\'a}rik}, \bibinfo{person}{Igor Sokolov}, \bibinfo{person}{Ilyas Fatkhullin}, \bibinfo{person}{Elnur Gasanov}, \bibinfo{person}{Zhize Li}, {and} \bibinfo{person}{Eduard Gorbunov}.} \bibinfo{year}{2022}\natexlab{}.
\newblock \bibinfo{title}{3PC: Three Point Compressors for Communication-Efficient Distributed Training and a Better Theory for Lazy Aggregation}.
\newblock
\newblock
\showeprint[arxiv]{2202.00998}~[cs.LG]


\bibitem[Sahu et~al\mbox{.}(2020)]%
        {Sahu2018FederatedOI}
\bibfield{author}{\bibinfo{person}{Anit~Kumar Sahu}, \bibinfo{person}{Tian Li}, \bibinfo{person}{Maziar Sanjabi}, \bibinfo{person}{Manzil Zaheer}, \bibinfo{person}{Ameet Talwalkar}, {and} \bibinfo{person}{Virginia Smith}.} \bibinfo{year}{2020}\natexlab{}.
\newblock \showarticletitle{Federated Optimization in Heterogeneous Networks}. In \bibinfo{booktitle}{\emph{MLSys}}.
\newblock


\bibitem[Seide et~al\mbox{.}(2014)]%
        {1bit}
\bibfield{author}{\bibinfo{person}{Frank Seide}, \bibinfo{person}{Hao Fu}, \bibinfo{person}{Jasha Droppo}, \bibinfo{person}{Gang Li}, {and} \bibinfo{person}{Dong Yu}.} \bibinfo{year}{2014}\natexlab{}.
\newblock \showarticletitle{1-Bit Stochastic Gradient Descent and Application to Data-Parallel Distributed Training of Speech {DNNs}}. In \bibinfo{booktitle}{\emph{Interspeech}}.
\newblock


\bibitem[Shieh et~al\mbox{.}(2011)]%
        {shieh2011sharing}
\bibfield{author}{\bibinfo{person}{Alan Shieh}, \bibinfo{person}{Srikanth Kandula}, \bibinfo{person}{Albert Greenberg}, \bibinfo{person}{Changhoon Kim}, {and} \bibinfo{person}{Bikas Saha}.} \bibinfo{year}{2011}\natexlab{}.
\newblock \showarticletitle{Sharing the Data Center Network}. In \bibinfo{booktitle}{\emph{NSDI}}.
\newblock


\bibitem[Stich et~al\mbox{.}(2018)]%
        {stich2018sparsified}
\bibfield{author}{\bibinfo{person}{Sebastian~U Stich}, \bibinfo{person}{Jean-Baptiste Cordonnier}, {and} \bibinfo{person}{Martin Jaggi}.} \bibinfo{year}{2018}\natexlab{}.
\newblock \showarticletitle{Sparsified SGD with Memory}. In \bibinfo{booktitle}{\emph{NeurIPS}}.
\newblock


\bibitem[Suresh et~al\mbox{.}(2017)]%
        {Suresh2017}
\bibfield{author}{\bibinfo{person}{Ananda~Theertha Suresh}, \bibinfo{person}{Felix~X. Yu}, \bibinfo{person}{Sanjiv Kumar}, {and} \bibinfo{person}{H.~Brendan McMahan}.} \bibinfo{year}{2017}\natexlab{}.
\newblock \showarticletitle{Distributed Mean Estimation with Limited Communication}. In \bibinfo{booktitle}{\emph{ICML}}.
\newblock


\bibitem[Vogels et~al\mbox{.}(2019)]%
        {vogels2019powersgd}
\bibfield{author}{\bibinfo{person}{Thijs Vogels}, \bibinfo{person}{Sai~Praneeth Karimireddy}, {and} \bibinfo{person}{Martin Jaggi}.} \bibinfo{year}{2019}\natexlab{}.
\newblock \showarticletitle{PowerSGD: Practical Low-Rank Gradient Compression for Distributed Optimization}. In \bibinfo{booktitle}{\emph{NeurIPS}}.
\newblock


\bibitem[Wang et~al\mbox{.}(2018)]%
        {wang2018atomo}
\bibfield{author}{\bibinfo{person}{Hongyi Wang}, \bibinfo{person}{Scott Sievert}, \bibinfo{person}{Shengchao Liu}, \bibinfo{person}{Zachary Charles}, \bibinfo{person}{Dimitris Papailiopoulos}, {and} \bibinfo{person}{Stephen Wright}.} \bibinfo{year}{2018}\natexlab{}.
\newblock \showarticletitle{Atomo: Communication-efficient learning via atomic sparsification}. In \bibinfo{booktitle}{\emph{NeurIPS}}.
\newblock


\bibitem[Wang et~al\mbox{.}(2023a)]%
        {CocktailSGD}
\bibfield{author}{\bibinfo{person}{Jue Wang}, \bibinfo{person}{Yucheng Lu}, \bibinfo{person}{Binhang Yuan}, \bibinfo{person}{Beidi Chen}, \bibinfo{person}{Percy Liang}, \bibinfo{person}{Christopher De~Sa}, \bibinfo{person}{Christopher Re}, {and} \bibinfo{person}{Ce Zhang}.} \bibinfo{year}{2023}\natexlab{a}.
\newblock \showarticletitle{CocktailSGD: Fine-tuning Foundation Models over 500Mbps Networks}. In \bibinfo{booktitle}{\emph{ICML}}.
\newblock


\bibitem[Wang et~al\mbox{.}(2019)]%
        {wang2019adaptive}
\bibfield{author}{\bibinfo{person}{Shiqiang Wang}, \bibinfo{person}{Tiffany Tuor}, \bibinfo{person}{Theodoros Salonidis}, \bibinfo{person}{Kin~K. Leung}, \bibinfo{person}{Christian Makaya}, \bibinfo{person}{Ting He}, {and} \bibinfo{person}{Kevin Chan}.} \bibinfo{year}{2019}\natexlab{}.
\newblock \showarticletitle{Adaptive Federated Learning in Resource Constrained Edge Computing Systems}.
\newblock \bibinfo{journal}{\emph{IEEE Journal on Selected Areas in Communications}} \bibinfo{volume}{37}, \bibinfo{number}{6} (\bibinfo{year}{2019}), \bibinfo{pages}{1205--1221}.
\newblock


\bibitem[Wang et~al\mbox{.}(2023b)]%
        {egeria}
\bibfield{author}{\bibinfo{person}{Yiding Wang}, \bibinfo{person}{Decang Sun}, \bibinfo{person}{Kai Chen}, \bibinfo{person}{Fan Lai}, {and} \bibinfo{person}{Mosharaf Chowdhury}.} \bibinfo{year}{2023}\natexlab{b}.
\newblock \showarticletitle{Egeria: Efficient DNN Training with Knowledge-Guided Layer Freezing}. In \bibinfo{booktitle}{\emph{EuroSys}}.
\newblock


\bibitem[Wen et~al\mbox{.}(2017)]%
        {terngrad}
\bibfield{author}{\bibinfo{person}{Wei Wen}, \bibinfo{person}{Cong Xu}, \bibinfo{person}{Feng Yan}, \bibinfo{person}{Chunpeng Wu}, \bibinfo{person}{Yandan Wang}, \bibinfo{person}{Yiran Chen}, {and} \bibinfo{person}{Hai Li}.} \bibinfo{year}{2017}\natexlab{}.
\newblock \showarticletitle{TernGrad: Ternary Gradients to Reduce Communication in Distributed Deep Learning}. In \bibinfo{booktitle}{\emph{NeurIPS}}.
\newblock


\bibitem[Xu et~al\mbox{.}(2021)]%
        {grace}
\bibfield{author}{\bibinfo{person}{Hang Xu}, \bibinfo{person}{Chen-Yu Ho}, \bibinfo{person}{Ahmed~M. Abdelmoniem}, \bibinfo{person}{Aritra Dutta}, \bibinfo{person}{El~Houcine Bergou}, \bibinfo{person}{Konstantinos Karatsenidis}, \bibinfo{person}{Marco Canini}, {and} \bibinfo{person}{Panos Kalnis}.} \bibinfo{year}{2021}\natexlab{}.
\newblock \showarticletitle{GRACE: A Compressed Communication Framework for Distributed Machine Learning}. In \bibinfo{booktitle}{\emph{ICDCS}}.
\newblock


\bibitem[Xu et~al\mbox{.}(2022)]%
        {xu2021bandwidth}
\bibfield{author}{\bibinfo{person}{Jie Xu}, \bibinfo{person}{Heqiang Wang}, {and} \bibinfo{person}{Lixing Chen}.} \bibinfo{year}{2022}\natexlab{}.
\newblock \showarticletitle{Bandwidth Allocation for Multiple Federated Learning Services in Wireless Edge Networks}.
\newblock \bibinfo{journal}{\emph{IEEE Transactions on Wireless Communications}} \bibinfo{volume}{21}, \bibinfo{number}{4} (\bibinfo{year}{2022}), \bibinfo{pages}{2534--2546}.
\newblock


\end{thebibliography}

\clearpage
\appendix
\onecolumn
\section{Problem Formulation}
\label{app:formulation}
Algorithm~\ref{alg:META} is a generic form for solving problem  \eqref{eq:main} which can be used as a baseline.
\begin{algorithm}[h]
	\centering
	\caption{A Generic Distributed Training Meta-Algorithm}\label{alg:META}
	\begin{algorithmic}[1]
		\State {\bf Input:} loss $\ell$; weights $w_m$,  datasets $\cD_m$ and algorithms $A_m^{\rm update}$ for $m\in [M]$; learning rate schedule $\{\gamma^k\}$ for iterations $k\geq 0$
		\State Initialize with model $x^0\in \R^d$ on the server
		\For{$k=0,1, 2, \dots $}
			\State The server broadcasts model $x^k$ to all machines $m \in [M]$
			\State Each machine $m \in [M]$ uses algorithm $A_m$ to compute the update
\[u_m^k = A_m^{\rm update} \left(x^k, \ell, \cD_m \right) \in \R^d\]

\State Each machine $m\in [M]$ uploads the update $u_m^k$ to the server
			\State The server  updates the model via
\[x^{k+1} = x^k - \gamma^k \sum_{m=1}^M w_m u_m^k ,\]
where $\gamma^k>0$ is a learning rate
		\EndFor
	\end{algorithmic}
\end{algorithm}

Here are some canonical examples:
\begin{itemize}[left=0pt,labelsep=10pt]
	\item If $A_m^{\rm update}$ performs one step of gradient descent with respect to function $f_m$, i.e., $$u_m^k = \alpha^k \nabla  \ExpD{\xi\sim \cD_m}{\ell(x^k,\xi)} = \alpha^k \nabla f_m(x^k),$$ where $\alpha_m^k$ is a step size, then Algorithm~\ref{alg:META} becomes gradient descent (with step size $\gamma^k\alpha^k$) for solving problem~\eqref{eq:main}. If $A_m^{\rm update}$ applies multiple steps of gradient descent instead,
	then Algorithm~\ref{alg:META} becomes local gradient descent~\cite{localGD, localSGD-AISTATS2020}.
	
		\item If $A_m^{\rm update}$ performs one step of stochastic gradient descent with respect to function $f_m$, i.e.,  $$u_m^k = \frac{1}{b} \sum_{j=1}^b \nabla \ell(x^k,\xi_j^k), \qquad \text{where} \qquad \xi_1^k,\dots,\xi_b^k \sim \cD_m,$$ then Algorithm~\ref{alg:META} becomes a variant of mini-batch stochastic gradient descent for solving problem~\eqref{eq:main}. If $A_m^{\rm update}$ applies multiple steps of stochastic gradient descent instead,
	then Algorithm~\ref{alg:META} becomes local stochastic gradient descent~\cite{localSGD-AISTATS2020}.

\end{itemize}
In practice, not all workers will participate in every epoch's training. There are many worker sampling algorithms proposed~\cite{
Li2019OnTC, Sahu2018FederatedOI} to speed up the training. However, these algorithms can also introduce bias and have various behaviors on different tasks. In this work, we consider the situation of full participation of workers to avoid the influence of worker sampling.
\begin{enumerate}
\item The server broadcasts model $x^k$ to all workers $m \in [M]$;
\item Each machine $m \in [M]$ computes update
\[\hat{u}_m^k = A_m \left(x^k, \ell, \cD_m \right)\]
via some algorithm $A_m$ and uploads the update to the server;
\item The server aggregates the updates and updates the model via
\[x^{k+1} = x^k - \gamma^k \sum_{m=1}^M w_m \hat{u}_m^k ,\]
where $\gamma^k$ is a learning rate.
\end{enumerate}

\section{Kimad Algorithm with Explanation}
Algorithm \ref{alg:KimadFull} illustrates the Kimad algorithm with more details and comments.
\label{app:kimad}
\begin{algorithm}[H]
	\centering
		\caption{Kimad: Adaptive Gradient Compression with Bandwidth Awareness (Detailed)}
	\begin{algorithmic}[1]
		\State {\bf Input:} loss $\ell$; weights $w_m$, datasets $\cD_m$ and algorithms $A_m^{\rm update}$ for computing the model update on each machine  $m\in [M]$;  set of compressors $\Omega$; compressor-selection algorithm $A^{\rm compress}$ used by the server and the machines; model $x^0\in \R^d$ known by the server; initial model estimator  $\hat{x}^{-1}\in \R^d$  known by the machines and the server (for example, $\hat{x}^{-1}=0$ or $\hat{x}^{-1}=x^0$ are acceptable choices);  initial update estimators  $\hat{u}_{m}^{-1}\in \R^d$ for $m\in [M]$ known by the machines and the server (for example, $\hat{u}_{m}^{-1}=0$ for all $m\in [M]$ is an acceptable choice);   single round time budget $t>0$; learning rate schedule $\{\gamma^k\}$ for iterations $k\geq 0$

		\For{{\bf each communication round} $k=0,1, 2, \dots $}

\State The server estimates the broadcast/downlink bandwidth at communication round $k$; let the estimate be $B^k$

  \State The server chooses a compressor $\cC^k \in \Omega$ for compressing the difference $x^k - \hat{x}^{k-1}$ via  algorithm $A^{\rm compress}$
\[\cC^k = A^{\rm compress}(\Omega, x^k, \hat{x}^{k-1}, B^k, t)\]
(The algorithm $A^{\rm compress}$ aims to choose the compressor from $\Omega$ suffering minimal error when compressing the difference $x^k- \hat{x}^{k-1}$, subject to the constraint that the compressed message should take at most $t$ seconds to broadcast to the machines given the broadcast bandwidth estimate $B^k$)

\State  The server updates the model estimator to $$\hat{x}^{k} = \hat{x}^{k-1} + \cC^k( x^k - \hat{x}^{k-1})$$

\State  The server broadcasts the compressed vector $\cC^k( x^k - \hat{x}^{k-1})$
to all machines $m \in [M]$

\For{{\bf each machine} $m=1, 2, \dots, M $ {\bf in parallel}}			
            \State  Update the model estimator to $$\hat{x}^{k} = \hat{x}^{k-1} + \cC^k( x^k - \hat{x}^{k-1})$$ using the previously stored estimator $\hat{x}^{k-1}$ and the received message

			\State Use algorithm $A_m^{\rm update}$ to compute the update
\[u_m^k = A_m^{\rm update} \left(\hat{x}^k, \ell, \cD_m \right) \in \R^d\]

            \State Estimate the uplink bandwidth of machine $m$ at communication round $k$; let the estimate be $B_m^k$

\State Choose a compressor $\cC_{m}^k \in \Omega$ for compressing the difference $u_m^k - \hat{u}_m^{k-1}$ via  algorithm $A^{\rm compress}$
\[\cC_m^k = A^{\rm compress}(\Omega, u_m^k, \hat{u}_m^{k-1}, B_{m}^k, t)\]
(The algorithm $A^{\rm compress}$ aims to choose the compressor from $\Omega$ suffering minimal error when compressing the difference $u_m^k - \hat{u}_m^{k-1}$, subject to the constraint that the compressed message should take at most $t$ seconds to upload to the server given the uplink bandwidth estimate $B_m^k$)

\State  Upload the compressed vector {\color{black}$\cC_{m}^k(u_m^k-\hat{u}_m^{k-1} )$}
 to the server
		\EndFor
\State The server updates all update estimators to $$\hat{u}_m^k = \hat{u}_{m}^{k-1} + \cC_{m}^k(u_m^k-\hat{u}_m^{k-1} ), \qquad m\in [M]$$
\State The server updates the model via
\[x^{k+1} = x^k - \gamma^k \sum_{m=1}^M w_m \hat{u}_m^k ,\]
where $\gamma^k>0$ is a learning rate
		\EndFor
	\end{algorithmic}
\label{alg:KimadFull}
\end{algorithm}

\section{Kimad+ Dynamic Programming}
Algorithm \ref{alg:dp} lists the dynamic programming algorithm to optimize the layer-wise compression ratio allocation to minimize the compression error.
\label{app:dp}
\begin{algorithm}[H]
	\centering
		\caption{A dynamic programming algorithm to allocate compression ratio across layers}\label{alg:layerwise}
	\begin{algorithmic}[1]
             \State {\bf Input:} Model Layers $L_i$, accumulated gradients $G_i$, possible compression parameters $C = \{c^1, c^2, \ldots, c^k\}$, model compression budget $\mathcal{E}_{\max}$, discretization factor $D$
             \State N = number of layers
             \State Compute $\mathcal{E}_{\max}$ for the default compression parameters $C_i^d$
             \State Compute discretization step $\mathcal{E}_{\max} / D$.
             \State Costs matrix $N \times |C|$ where position $i, j$ has a value of the size of layer $i$ compressed with compression parameter $c^j$.
             \State Errors matrix $N \times |C|$ where position $i, j$ has a value of the discretized $L_2$ of the compression error when the accumulated gradients of layer $i$ are compressed with parameter $c^j$.
             \State $DP$ matrix $N \times (D + 1)$ filled with $\infty$ values.
             \State $PD$ matrix $N \times (D + 1)$.
             \For{$c \in C$}
                \State $DP[1][Costs[1][c]] = Errors[1][c]$
                \State $PD[1][Costs[1][c]] = c$
             \EndFor
             \State // Dynamic programming algorithm
             \For{Layer $l_i:= 2 .. N$}
                \For {$c_i \in C$}
                    \For {$cost_i:= Costs[l_i][c_i] .. D$}
                        \State $t = DP[l_i - 1][cost_i - Costs[l_i][c_i]] + Errors[l_i][c_i]$
                        \If{$t < DP[l_i][e_i]$}
                             \State $DP[l_i][e_i] = t$
                            \State $PD[l_i][e_i] = c_i$
                        \EndIf
                    \EndFor
                \EndFor
            \EndFor
             \State $cost_{min} = argmin(DP[N])$
             \State // Reconstruction of the optimal parameters
             \For{$l_i = N .. 1$}
                \State $result[l_i] = PD[l_i][cost_{min}]$
                \State $cost_{min} = cost_{min} - Costs[l_i][result[l_i]]$
             \EndFor
             \State \textbf{return} $result$
	\end{algorithmic}
\label{alg:dp}
\end{algorithm}

\section{Assumptions and Basic Identities}
\label{app:assumptions}
\subsection{Layer Smoothness}
\label{app:layer_smooth}
\begin{assumption}[Layer smoothness] \label{ass:L_i} There exist constants $L_1,\dots,L_{\nlay}>0$ such that
\[f(x+s) \leq f(x) + \inner{\nabla f(x)}{s} + \frac{1}{2}\sum_{i=1}^{\nlay} L_i \norm{s_i}^2\]
holds for all $x\in \R^d$ and $s=(s_1,\dots,s_{\nlay}) \in \R^d$.
\end{assumption}
\subsection{Global Smoothness}
\label{app:global_smooth}
\begin{assumption}[Global smoothness] \label{ass:L} There exists a constant $L>0$ such that
\begin{equation}\label{eq:L-Lipschitz-gradient} \norm{\nabla f(x+s) - \nabla f(x)} \leq  L \norm{s}\end{equation}
holds for all $x,s\in \R^d$.
\end{assumption}
\label{app:inequality}
\subsection{Definition of $\bar{G}^{k}$}
Choose any $\delta_1,\dots,\delta_{\nlay}>0$ and define
\begin{equation}\label{eq:def-G^k}
\bar{G}^{k} \eqdef \Exp{G^{k}}, \; G^{k} \eqdef  \sum_{i=1}^{\nlay} \delta_i G_i^{k}, \; G_i^k \eqdef \sqnorm{\hat{u}_i^k - \nabla_i f(x^{k})}.
\end{equation}
\subsection{Young's inequality} For any $x,y\in \R^d$ and any $\zeta>0$ we have \begin{equation}\label{eq:Young} \sqnorm{x+y}\leq (1+\zeta)\sqnorm{x} + (1+\zeta^{-1})\sqnorm{y}.\end{equation}

\section{Technical Lemmas}
\label{app:lemma}
\begin{lemma}[Technical identity]\label{lem:relation}
Let $x,\hat{u}\in \R^d$ and  \begin{equation}\label{eq:b79gfd-9898df-step}x^+ \eqdef x - \gamma \hat{u},\end{equation} where $\gamma >0$. Then for any $M>0$ and any $h\in \R^d$ we have the identity
	\begin{align}
 \inner{h}{x^+-x}
	+ \frac{M}{2}\norm{x^+-x}^2
& =  - \frac{\gamma}{2} \norm{h}^2 - \left(\frac{1}{2\gamma} - \frac{M}{2}\right) \norm{x^+ -x}^2  	+ \frac{\gamma}{2}\norm{\hat{u} - h}^2.\label{eq:relation_0}
	\end{align}
\end{lemma}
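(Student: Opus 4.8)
This is a pure algebraic identity in the Hilbert space $\R^d$, so the plan is a short direct computation driven by the single structural fact coming from the update rule \eqref{eq:b79gfd-9898df-step}. First I would record that, by definition of $x^+$, we have $x^+ - x = -\gamma \hat{u}$; hence $\norm{x^+-x}^2 = \gamma^2 \sqnorm{\hat{u}}$ and $\inner{h}{x^+-x} = -\gamma \inner{h}{\hat{u}}$. This already reduces the whole claim to an identity involving only $\gamma$, $M$, $h$, and $\hat{u}$.

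The cleanest route from there is to eliminate the cross term $\inner{h}{\hat{u}}$ via the polarization (cosine-rule) identity $\inner{\hat{u}}{h} = \tfrac{1}{2}\big(\sqnorm{\hat{u}} + \sqnorm{h} - \sqnorm{\hat{u}-h}\big)$, which turns $\inner{h}{x^+-x}$ into $-\tfrac{\gamma}{2}\sqnorm{\hat{u}} - \tfrac{\gamma}{2}\sqnorm{h} + \tfrac{\gamma}{2}\sqnorm{\hat{u}-h}$. The only remaining bookkeeping step is to convert $-\tfrac{\gamma}{2}\sqnorm{\hat{u}}$ back into a multiple of $\sqnorm{x^+-x}$: since $\sqnorm{x^+-x} = \gamma^2 \sqnorm{\hat{u}}$, one has $-\tfrac{\gamma}{2}\sqnorm{\hat{u}} = -\tfrac{1}{2\gamma}\sqnorm{x^+-x}$. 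Adding $\tfrac{M}{2}\sqnorm{x^+-x}$ to both sides then makes the left-hand side equal to $-\tfrac{\gamma}{2}\sqnorm{h} - \big(\tfrac{1}{2\gamma}-\tfrac{M}{2}\big)\sqnorm{x^+-x} + \tfrac{\gamma}{2}\sqnorm{\hat{u}-h}$, which is exactly \eqref{eq:relation_0}. An equivalent, even more mechanical alternative is to expand the right-hand side directly, using $\sqnorm{\hat{u}-h} = \sqnorm{\hat{u}} - 2\inner{\hat{u}}{h} + \sqnorm{h}$ and $\sqnorm{x^+-x}=\gamma^2\sqnorm{\hat{u}}$, and check that after the $\pm\tfrac{\gamma}{2}\sqnorm{h}$ and $\pm\tfrac{\gamma}{2}\sqnorm{\hat{u}}$ terms cancel one is left with $\tfrac{M\gamma^2}{2}\sqnorm{\hat{u}} - \gamma\inner{\hat{u}}{h}$, matching the expanded left-hand side.

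There is no genuine obstacle here: the lemma is just a repackaging of the cosine rule tailored to the gradient-type step $x^+ = x - \gamma\hat{u}$, and the single point deserving attention is keeping the powers of $\gamma$ straight when passing between $\sqnorm{\hat{u}}$ and $\sqnorm{x^+-x}$. What matters downstream is that this is an \emph{exact} identity (not an inequality), since it will be substituted verbatim into the per-layer descent computation — presumably with $h = \nabla_i f(x^{k+1})$, $\gamma = \gamma_i^k$, and $M$ playing the role of a smoothness constant such as $L_i$.
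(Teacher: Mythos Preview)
Your argument is correct: substituting $x^+-x=-\gamma\hat{u}$ and applying the polarization identity $\inner{\hat{u}}{h}=\tfrac{1}{2}(\sqnorm{\hat{u}}+\sqnorm{h}-\sqnorm{\hat{u}-h})$, then rewriting $-\tfrac{\gamma}{2}\sqnorm{\hat{u}}=-\tfrac{1}{2\gamma}\sqnorm{x^+-x}$, yields the identity exactly as claimed. The paper states Lemma~\ref{lem:relation} without proof, so there is no alternative argument to compare against; your derivation is the natural one. One small correction to your closing remark on downstream use: in the proof of Lemma~\ref{lem:98u09ufd_897} the substitution is $h\leftarrow\nabla_i f(x^k)$, not $\nabla_i f(x^{k+1})$.
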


Let $x^k = (x_1^k,\dots,x_{\nlay}^k)$. The lemma below holds for any algorithm of the form
\[x^{k+1}_i = x^k_i - \gamma_i^k \hat{u}_i^k, \qquad i=1,\dots,\nlay,\]
where $\gamma_1^k,\dots,\gamma_{\nlay}^k>0$ are stepsizes.  Therefore, it also holds for Algorithm~\eqref{eq:EF21-1}--\eqref{eq:EF21-3}.

\begin{lemma}[Descent]\label{lem:98u09ufd_897} If Assumption~\ref{ass:L_i} holds, then
\begin{eqnarray*}
f(x^{k+1})  \leq
f(x^k) + \sum_{i=1}^{\nlay} \left(   - \frac{\gamma_i^k}{2} \norm{\nabla_i f(x^k)}^2 - \left(\frac{1}{2\gamma_i^k} - \frac{L_i}{2}\right) \norm{x_i^{k+1}-x_i^{k}}^2  + \frac{\gamma_i^k}{2}\norm{\hat{u}_i^k - \nabla_i f(x^k)}^2 \right).
\end{eqnarray*}

\end{lemma}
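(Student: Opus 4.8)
The plan is to derive the descent inequality by combining the layer-smoothness bound of Assumption~\ref{ass:L_i} with the exact algebraic identity from Lemma~\ref{lem:relation}, applying the latter one layer at a time. The key observation is that both the smoothness bound and the update rule decouple across the $\nlay$ layers, so the whole argument reduces to a layer-wise substitution with no genuine analytic content beyond the two cited facts.

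First, I would instantiate Assumption~\ref{ass:L_i} at the point $x = x^k$ with displacement $s = x^{k+1} - x^k$, whose $i$-th block is $s_i = x_i^{k+1} - x_i^k = -\gamma_i^k \hat{u}_i^k$. Splitting the inner product $\inner{\nabla f(x^k)}{x^{k+1}-x^k} = \sum_{i=1}^{\nlay}\inner{\nabla_i f(x^k)}{x_i^{k+1}-x_i^k}$ over the layers, this gives
\[f(x^{k+1}) \leq f(x^k) + \sum_{i=1}^{\nlay}\left( \inner{\nabla_i f(x^k)}{x_i^{k+1} - x_i^k} + \frac{L_i}{2}\norm{x_i^{k+1} - x_i^k}^2 \right).\]

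Second, I would apply Lemma~\ref{lem:relation} to each summand under the dictionary $x \leftarrow x_i^k$, $\hat{u} \leftarrow \hat{u}_i^k$, $\gamma \leftarrow \gamma_i^k$, $x^+ \leftarrow x_i^{k+1}$ (consistent with \eqref{eq:b79gfd-9898df-step} since $x_i^{k+1} = x_i^k - \gamma_i^k\hat{u}_i^k$), the free parameter $M \leftarrow L_i$, and $h \leftarrow \nabla_i f(x^k)$. Identity \eqref{eq:relation_0} then rewrites the bracketed expression above exactly as
\[-\frac{\gamma_i^k}{2}\norm{\nabla_i f(x^k)}^2 - \left(\frac{1}{2\gamma_i^k} - \frac{L_i}{2}\right)\norm{x_i^{k+1} - x_i^k}^2 + \frac{\gamma_i^k}{2}\norm{\hat{u}_i^k - \nabla_i f(x^k)}^2.\]
Summing over $i = 1,\dots,\nlay$ and re-inserting into the smoothness bound yields precisely the claimed inequality.

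Because Lemma~\ref{lem:relation} supplies an equality rather than an estimate, the only inequality in the whole derivation is the single application of Assumption~\ref{ass:L_i}, so no slack beyond the smoothness step is introduced. The main thing to be careful about is the variable bookkeeping — in particular matching the combination $\inner{\nabla_i f(x^k)}{x_i^{k+1}-x_i^k} + \frac{L_i}{2}\norm{x_i^{k+1}-x_i^k}^2$ to the left-hand side of \eqref{eq:relation_0} with the choice $M = L_i$, and confirming that the layer-wise update fits the form \eqref{eq:b79gfd-9898df-step} required by the technical identity. There is no real obstacle here; the result is a clean assembly of the two cited facts.
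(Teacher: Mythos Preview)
Your proposal is correct and mirrors the paper's proof essentially verbatim: the paper also applies Assumption~\ref{ass:L_i} with $x\leftarrow x^k$, $s\leftarrow x^{k+1}-x^k$, splits the inner product layer-wise, and then invokes Lemma~\ref{lem:relation} with the identical substitutions $x^+\leftarrow x_i^{k+1}$, $x\leftarrow x_i^k$, $\gamma\leftarrow\gamma_i^k$, $\hat{u}\leftarrow\hat{u}_i^k$, $h\leftarrow\nabla_i f(x^k)$, $M\leftarrow L_i$. The only cosmetic difference is that the paper states the application of Lemma~\ref{lem:relation} before the smoothness bound, whereas you present them in the reverse order.
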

\begin{proof}
By applying Lemma~\ref{lem:relation} with $d\leftarrow d_i$, $x^+ \leftarrow x_i^{k+1}$, $x\leftarrow x_i^k$, $\gamma \leftarrow \gamma_i^k$, $\hat{u}\leftarrow \hat{u}_i^k$, $h\leftarrow \nabla_i f(x^k)$ and $M\leftarrow L_i$, we get
{\small
	\begin{eqnarray}
 \inner{\nabla_i f(x^k)}{x_i^{k+1} - x_i^{k}}
	+ \frac{L_i}{2}\norm{x_i^{k+1}-x_i^{k}}^2
 &=&  - \frac{\gamma_i^k}{2} \norm{\nabla_i f(x^k)}^2 - \left(\frac{1}{2\gamma_i^k} - \frac{L_i}{2}\right) \norm{x_i^{k+1}-x_i^{k}}^2  \notag  \\
 && \quad	+ \frac{\gamma_i^k}{2}\norm{\hat{u}_i^k - \nabla_i f(x^k)}^2.\label{eq:relation}
	\end{eqnarray}
	}
Using Assumption~\ref{ass:L_i}, with $x\leftarrow x^k = (x_1^k,\dots,x_{\nlay}^k)$ and 	$s\leftarrow x^{k+1}-x^k = (x_1^{k+1}-x_1^{k}, \dots, x_{\nlay}^{k+1}-x_{\nlay}^{k}) = (-\gamma_1^k \hat{u}_1^k, \dots, -\gamma_{\nlay}^k \hat{u}_{\nlay}^k)$, we get
\begin{eqnarray*}
f(x^{k+1}) & \leq & f(x^k) + \sum_{i=1}^{\nlay} \inner{\nabla_i f(x^k)}{ x_i^{k+1}-x_i^{k} } + \frac{1}{2}\sum_{i=1}^{\nlay} L_i \norm{x_i^{k+1}-x_i^{k}}^2 \\
&=& f(x^k) + \sum_{i=1}^{\nlay} \left( \inner{\nabla_i f(x^k)}{ x_i^{k+1}-x_i^{k} } + \frac{L_i}{2}  \norm{x_i^{k+1}-x_i^{k}}^2 \right)\\
&\overset{\eqref{eq:relation}}{=}&
f(x^k) + \sum_{i=1}^{\nlay} \left(   - \frac{\gamma_i^k}{2} \norm{\nabla_i f(x^k)}^2 - \left(\frac{1}{2\gamma_i^k} - \frac{L_i}{2}\right) \norm{x_i^{k+1}-x_i^{k}}^2  + \frac{\gamma_i^k}{2}\norm{\hat{u}_i^k - \nabla_i f(x^k)}^2 \right).
\end{eqnarray*}

\end{proof}

Our next lemma is specific to Algorithm~\eqref{eq:EF21-1}--\eqref{eq:EF21-3}.
	
	\begin{lemma}[3PC inequality] \label{lem:theta-beta} Choose any $i\in [\nlay]$.  Let $\cC_i^k \in \mathbb{C}^{d_i}(\alpha_i)$ for $0<\alpha_i \leq 1$. Define
	$G_i^k \eqdef  \sqnorm{ \hat{u}_i^k - \nabla_i f(x^k) } $ and $W^k \eqdef \{\hat{u}_i^k, \dots, \hat{u}_{\nlay}^k, x^k, x^{k+1}\}$. 		Then		\begin{equation}\label{eq:90y0yfhdf} \Exp{ G_i^{k+1} \;|\; W^k} \leq (1-\theta_i)   G_i^k + \beta_i  \sqnorm{\nabla_i f(x^{k+1}) - \nabla_i f(x^k)} ,
		\end{equation}
		where
		\begin{equation}\label{eq:theta-beta-def}\theta_i \eqdef 1- (1- \alpha_i )(1+\zeta_i), \qquad \text{and} \qquad \beta_i \eqdef (1- \alpha_i ) \left(1+ \zeta_i^{-1} \right)
		\end{equation}
		and  $\zeta_i$ is any positive number.

		\end{lemma}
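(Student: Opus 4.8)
The plan is to reduce \eqref{eq:90y0yfhdf} to the defining contraction property of the compressor class $\mathbb{C}^{d_i}(\alpha_i)$ followed by a single application of Young's inequality \eqref{eq:Young}. First I would rewrite the error $\hat{u}_i^{k+1} - \nabla_i f(x^{k+1})$ using the update rule \eqref{eq:EF21-2}. Abbreviating $r_i^k \eqdef \nabla_i f(x^{k+1}) - \hat{u}_i^k$ for the vector being compressed, substitution gives
\[
\hat{u}_i^{k+1} - \nabla_i f(x^{k+1}) = \hat{u}_i^k + \cC_i^k(r_i^k) - \nabla_i f(x^{k+1}) = \cC_i^k(r_i^k) - r_i^k,
\]
so that $G_i^{k+1} = \sqnorm{\cC_i^k(r_i^k) - r_i^k}$. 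The crucial observation is that, conditioned on $W^k$, the vector $r_i^k$ is deterministic: $W^k$ contains $\hat{u}_i^k$, $x^k$ and $x^{k+1}$, hence both $\nabla_i f(x^{k+1})$ and $\hat{u}_i^k$ are fixed, and the only remaining randomness is that of the compressor $\cC_i^k$.

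Next I would invoke the contraction inequality defining membership in $\mathbb{C}^{d_i}(\alpha_i)$, namely $\Exp{\sqnorm{\cC_i^k(v) - v}} \leq (1-\alpha_i)\sqnorm{v}$ for every $v$, applied with $v \leftarrow r_i^k$ and with expectation taken conditional on $W^k$. Since $r_i^k$ is $W^k$-measurable this yields
\[
\Exp{G_i^{k+1} \mid W^k} \leq (1-\alpha_i)\sqnorm{r_i^k} = (1-\alpha_i)\sqnorm{\nabla_i f(x^{k+1}) - \hat{u}_i^k}.
\]

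It then remains to split $r_i^k$ into an ``old error'' term plus a ``gradient drift'' term. I would write $\nabla_i f(x^{k+1}) - \hat{u}_i^k = (\nabla_i f(x^k) - \hat{u}_i^k) + (\nabla_i f(x^{k+1}) - \nabla_i f(x^k))$ and apply Young's inequality \eqref{eq:Young} with parameter $\zeta_i$; noting that $\sqnorm{\nabla_i f(x^k) - \hat{u}_i^k} = G_i^k$, the first term contributes $(1+\zeta_i)G_i^k$ and the second $(1+\zeta_i^{-1})\sqnorm{\nabla_i f(x^{k+1}) - \nabla_i f(x^k)}$. Multiplying through by $(1-\alpha_i)$ and reading off the definitions $\theta_i = 1-(1-\alpha_i)(1+\zeta_i)$ and $\beta_i = (1-\alpha_i)(1+\zeta_i^{-1})$ from \eqref{eq:theta-beta-def} delivers exactly \eqref{eq:90y0yfhdf}. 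There is no serious obstacle here; the only points demanding care are the measurability claim that lets the conditional expectation see only the compressor's randomness, and the bookkeeping of the two Young parameters so that the resulting coefficients line up precisely with the stated constants $\theta_i$ and $\beta_i$.
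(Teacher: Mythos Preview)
Your proposal is correct and follows essentially the same route as the paper's proof: substitute \eqref{eq:EF21-2}, apply the $(1-\alpha_i)$-contraction of $\cC_i^k$, then split $\nabla_i f(x^{k+1}) - \hat{u}_i^k$ into $(\nabla_i f(x^{k}) - \hat{u}_i^k) + (\nabla_i f(x^{k+1}) - \nabla_i f(x^{k}))$ and use Young's inequality with parameter $\zeta_i$. Your explicit discussion of why $r_i^k$ is $W^k$-measurable is a nice addition that the paper leaves implicit.
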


 	\begin{proof}
 		\begin{eqnarray*}
 			\Exp{ G_i^{k+1} \;|\; W^k} & = & \Exp{  \sqnorm{\hat{u}_i^{k+1} - \nabla_i f(x^{k+1})}  \;|\; W^k}	 \\
 			&\overset{\eqref{eq:EF21-2} }{=}& \Exp{  \sqnorm{\hat{u}_i^k + \cC_i^k ( \nabla_i f(x^{k+1}) - \hat{u}_i^k) - \nabla_i f(x^{k+1})}  \;|\; W^k}	 \\
 			& \leq &  (1-\alpha_i) \sqnorm{\nabla_i f(x^{k+1}) - \hat{u}_i^k} \\
		& =&  (1-\alpha_i) \sqnorm{ \nabla_i f(x^{k}) - \hat{u}_i^k + \nabla_i f(x^{k+1}) - \nabla_i f(x^{k}) } \\
			&\overset{\eqref{eq:Young}}{\leq} & (1-\alpha_i)  (1+ \zeta_i) \sqnorm{\nabla_i f(x^{k}) - \hat{u}_i^k}   + (1-\alpha_i)  \left(1+\zeta_i^{-1}\right) \sqnorm{\nabla_i f(x^{k+1}) - \nabla_i f(x^k)} , \end{eqnarray*}
			where the first inequality holds since $\cC_i^k \in \mathbb{C}^{d_i}(\alpha_i)$, and in the last step we have applied Young's inequality.			
 	\end{proof}

\section{Proof of Theorem~\ref{thm:main}}
\label{app:proof}
\begin{proof} We proceed in three steps:		
	\paragraph{\bf STEP 1.}
First, we note that Lemma~\ref{lem:theta-beta} says that			\begin{equation}\label{eq:n89fg9d08hfbdi_8f}
				\Exp{\sqnorm{\hat{u}_i^{k+1} -  \nabla_i f(x^{k+1})}\mid W^k } \overset{\eqref{eq:90y0yfhdf}}{\leq} (1 - \theta_i)\sqnorm{\hat{u}_i^k - \nabla_i f(x^k)} + \beta_i  \sqnorm {\nabla_i f(x^{k+1}) - \nabla_i f(x^{k})}.
			\end{equation}
			
			 Adding inequalities \eqref{eq:n89fg9d08hfbdi_8f} over $i \in [\nlay]$ and recalling that $G_i^k \eqdef \sqnorm{\hat{u}_i^k - \nabla_i f(x^k)}$, we get
		
			\begin{eqnarray}
				\Exp{G^{k+1} \mid W^k } &\overset{\eqref{eq:def-G^k}}{=}&
				\Exp{ \sum_{i=1}^{\nlay} \delta_i G_i^{k+1} \mid W^k } 		
\notag				\\
	&\overset{\eqref{eq:def-G^k}}{=}&			\Exp{ \sum_{i=1}^{\nlay} \delta_i \sqnorm{ \hat{u}_i^{k+1} - \nabla_i f(x^{k+1})} \mid W^k} \notag \\
	&=&			\sum_{i=1}^{\nlay} \delta_i \Exp{ \sqnorm{ \hat{u}_i^{k+1} - \nabla_i f(x^{k+1})} \mid W^k} \notag \\	
&\overset{\eqref{eq:n89fg9d08hfbdi_8f}}{\leq}&  \sum_{i=1}^{\nlay} \delta_i \left( \left(1 - \theta_i \right) \sqnorm{\hat{u}_i^k - \nabla_i f(x^{k})} +  \beta_i  \sqnorm {\nabla_i f(x^{k+1}) - \nabla_i f(x^{k})} \right) \notag \\
&=&  \sum_{i=1}^{\nlay} \delta_i  \left(1 - \theta_i \right) \sqnorm{\hat{u}_i^k - \nabla_i f(x^{k})} + \sum_{i=1}^{\nlay} \delta_i  \beta_i  \sqnorm {\nabla_i f(x^{k+1}) - \nabla_i f(x^{k})}\notag \\
				&\overset{\eqref{eq:def-G^k}}{=}&  \roundbrack{1 - \min_i \theta_i}G^k+  \sum_{i=1}^{\nlay} \delta_i \beta_i \sqnorm {\nabla_i f(x^{k+1}) - \nabla_i f(x^{k})} \notag \\
				&\le& \roundbrack{1 - \min_i \theta_i}G^k+  \left( \max_i \delta_i \beta_i\right) \sum_{i=1}^{\nlay} \sqnorm {\nabla_i f(x^{k+1}) - \nabla_i f(x^{k})}  \notag \\
& = & \roundbrack{1 - \min_i \theta_i}G^k+  \left( \max_i \delta_i \beta_i\right)  \sqnorm {\nabla f(x^{k+1}) - \nabla f(x^{k})}	\notag \\
&\overset{\eqref{eq:L-Lipschitz-gradient}}{\leq} &\roundbrack{1 - \min_i \theta_i}G^k+  \left( \max_i \delta_i \beta_i\right) L^2 \sqnorm {x^{k+1} - x^{k}}. \label{eq:jbiu-9u0df9}
			\end{eqnarray}
			Using the Tower property  in \eqref{eq:jbiu-9u0df9}, we proceed to
			\begin{eqnarray}\label{eq:main_recursion_distrib}
				\Exp{G^{k+1}} = \Exp{\Exp{G^{k+1} \mid W^k}} \overset{\eqref{eq:jbiu-9u0df9}}{\leq} \left(1 - \theta \right)\Exp{G^k}+  \left(\max_i \delta_i \beta_i \right)L^2   \Exp{ \sqnorm{x^{k+1} - x^k}}.
			\end{eqnarray}

	\paragraph{\bf STEP 2.}
 Next, using Lemma~\ref{lem:98u09ufd_897}, we obtain the bound
			\begin{eqnarray}
				f(x^{k+1}) &\leq& f(x^k) + \sum_{i=1}^{\nlay} \left(   - \frac{\gamma_i^k}{2} \norm{\nabla_i f(x^k)}^2 - \left(\frac{1}{2\gamma_i^k} - \frac{L_i}{2}\right) \norm{x_i^{k+1}-x_i^{k}}^2  + \frac{\gamma_i^k}{2}\norm{\hat{u}_i^k - \nabla_i f(x^k)}^2 \right).\notag \\
				& = &
				f(x^{k})- \sum_{i=1}^{\nlay} \frac{\gamma_i^k}{2}\sqnorm{\nabla_i f(x^{k})}-\sum_{i=1}^{\nlay} \left(\frac{1}{2 \gamma_i^k}-\frac{L_i}{2}\right)\sqnorm{x_i^{k+1}-x_i^{k}} \notag \\
				&& \qquad +\sum_{i=1}^{\nlay} \frac{\gamma_i^k}{2 \delta_i} \delta_i \norm{\hat{u}_i^k - \nabla_i f(x^k)}^2 \notag \\
				&\overset{\eqref{eq:def-G^k}}{\leq} &
		f(x^{k})- \sum_{i=1}^{\nlay} \frac{\gamma_i^k}{2}\sqnorm{\nabla_i f(x^{k})}-\sum_{i=1}^{\nlay} \left(\frac{1}{2 \gamma_i^k}-\frac{L_i}{2}\right)\sqnorm{x_i^{k+1}-x_i^{k}}  + \left(\max_i \frac{\gamma_i^k}{2 \delta_i} \right)  G^k.			\label{eq:aux_smooth_Lemma_distrib}
			\end{eqnarray}

			Subtracting $f^{\inf}$ from both sides of \eqref{eq:aux_smooth_Lemma_distrib} and taking expectation, we get
			\begin{eqnarray}
				\Exp{f(x^{k+1})-f^{\inf}} &\leq& \Exp{f(x^{k})-f^{\inf}}- \sum_{i=1}^{\nlay} \frac{\gamma_i^k}{2}\Exp{\sqnorm{\nabla_i f(x^{k})} } \notag \\
				&& \quad -\sum_{i=1}^{\nlay} \left(\frac{1}{2 \gamma_i^k}-\frac{L_i}{2}\right)\Exp{\sqnorm{x_i^{k+1}-x_i^{k}}} +  \left(\max_i \frac{\gamma_i^k}{2 \delta_i} \right) \Exp{ G^k } \notag \\
	&=&			
\Exp{f(x^{k})-f^{\inf}}- \frac{\gamma}{2}\sum_{i=1}^{\nlay} w_i \Exp{\sqnorm{\nabla_i f(x^{k})}} \notag \\
				&& \quad -\sum_{i=1}^{\nlay} \left(\frac{1}{2 \gamma w_i}-\frac{L_i}{2}\right)\Exp{\sqnorm{x_i^{k+1}-x_i^{k}}} +  \frac{\gamma}{2}\left(\max_i \frac{w_i}{\delta_i} \right) \Exp{ G^k } ,
\label{eq:func_diff_distrib}						\end{eqnarray}		
where we used the fact that $\gamma_i^k \equiv \gamma w_i$.

	\paragraph{\bf STEP 3: COMBINING PREVIOUS STEPS.}	 	
Due to the restriction $(1- \alpha_i )(1+\zeta_i) < 1$, which holds for all $i\in [\nlay]$, we know that $\theta_i>0$ for all $i$, and therefore, $\theta = \min_i \theta_i$ is also positive. This detail plays a role in what follows.

Let $\Delta^{k} \eqdef \Exp{f(x^{k})-f^{\inf}}$, $\bar{G}^{k} \eqdef \Exp{G^k }$, $R_i^k \eqdef \Exp{\sqnorm{x_i^{k+1}-x_i^k}}$ and
	$R^k \eqdef \sum_{i=1}^{\nlay} R_i^k = \Exp{\sqnorm{x^{k+1}-x^k}}.$
By adding \eqref{eq:func_diff_distrib} to the $\gamma\frac{\left(\max_i \frac{w_i}{\delta_i} \right)}{2 \theta}$ multiple of \eqref{eq:main_recursion_distrib}, we obtain
			\begin{eqnarray*}
				\Delta^{k+1}+\frac{\gamma\left(\max_i \frac{w_i}{\delta_i} \right)}{2 \theta} \bar{G}^{k+1} &\leq& \Delta^{k} - \frac{\gamma}{2}\sum_{i=1}^{\nlay} w_i \Exp{\sqnorm{\nabla_i f(x^{k})} } -\sum_{i=1}^{\nlay} \left(\frac{1}{2 \gamma w_i}-\frac{L_i}{2}\right) R_i^k\notag \\
				&&\quad +\frac{\gamma}{2} \left(\max_i \frac{w_i}{\delta_i} \right) \bar{G}^k +\frac{\gamma\left(\max_i \frac{w_i}{\delta_i} \right)}{2 \theta}\left(\roundbrack{1 - \theta} \bar{G}^k +  \left( \max_i \delta_i \beta_i\right) L^2 R^k	\right) \\
				&=&\Delta^{k}+\frac{\gamma\left(\max_i \frac{w_i}{\delta_i} \right)}{2\theta} \bar{G}^{k}-\frac{\gamma}{2}\sum_{i=1}^{\nlay} w_i \Exp{\sqnorm{\nabla_i f(x^{k})}} \notag \\
				&& \quad -\sum_{i=1}^{\nlay} \left(\frac{1}{2\gamma w_i} -\frac{L_i}{2} - \frac{\gamma\left(\max_i \frac{w_i}{\delta_i} \right)}{2\theta} \left(\max_i \delta_i \beta_i  \right) L^2 \right) R_i^k \\
				& \leq& \Delta^{k}+\frac{\gamma\left(\max_i \frac{w_i}{\delta_i} \right)}{2\theta} \bar{G}^{k} -\frac{\gamma}{2}\sum_{i=1}^{\nlay} w_i \Exp{\sqnorm{\nabla_i f(x^{k})}}.
			\end{eqnarray*}
			The last inequality follows from the bound \eqref{eq:stepsize_main}.	By summing up inequalities for $k =0, \ldots, K-1,$ we get
			$$
			0 \leq \Delta^{K}+\frac{\gamma\left(\max_i \frac{w_i}{\delta_i} \right)}{2 \theta} \bar{G}^{K} \leq \Delta^{0}+\frac{\gamma\left(\max_i \frac{w_i}{\delta_i} \right)}{2 \theta} \bar{G}^{0}-\frac{\gamma}{2} \sum_{k=0}^{K-1} \left( \sum_{i=1}^{\nlay} w_i \Exp{\sqnorm{\nabla_i f(x^{k})}}\right).
			$$
			Multiplying both sides by $\frac{2}{\gamma K}$, after rearranging we get
			$$
			\frac{1}{K} \sum_{k=0}^{K-1}  \left( \sum_{i=1}^{\nlay} w_i \Exp{\sqnorm{\nabla_i f(x^{k})}}\right) \leq \frac{2 \Delta^{0}}{\gamma K} + \frac{\left(\max_i \frac{w_i}{\delta_i} \right)\bar{G}^0}{\theta K}.
			$$
				\end{proof}
\end{document}